\newcommand\given[1][]{\:#1\vert\:}
\newtheorem{theorem}{Theorem}
\newtheorem*{theorem*}{Theorem}
\journal{Pattern Recognition}
\begin{document}

\begin{frontmatter}

\title{Target contrastive pessimistic risk \\for robust domain adaptation}

%% Group authors per affiliation:
\author[a]{Wouter M. Kouw\corref{ca}}
\cortext[ca]{Corresponding author}
\ead{W.M.Kouw@tudelft.nl}

\author[a,b]{Marco Loog}
\address[a]{Pattern Recognition Laboratory \\ Delft University of Technology \\ Mekelweg 4, 2628 CD Delft, The Netherlands}
\address[b]{The Image Group \\ University of Copenhagen \\ Universitetsparken 5, DK-2100 Copenhagen, Denmark}

\begin{abstract}
In domain adaptation, classifiers with information from a source domain adapt to generalize to a target domain. However, an adaptive classifier can perform worse than a non-adaptive classifier due to invalid assumptions, increased sensitivity to estimation errors or model misspecification. Our goal is to develop a domain-adaptive classifier that is robust in the sense that it does not rely on restrictive assumptions on how the source and target domains relate to each other and that it does not perform worse than the non-adaptive classifier. We formulate a conservative parameter estimator that only deviates from the source classifier when a lower risk is guaranteed for all possible labellings of the given target samples. We derive the classical least-squares and discriminant analysis cases and show that these perform on par with state-of-the-art domain adaptive classifiers in sample selection bias settings, while outperforming them in more general domain adaptation settings.
\end{abstract}

\begin{keyword}
Domain adaptation, Sample Selection Bias, Covariate Shift, Empirical Risk Minimization, Minimax
\end{keyword}

\end{frontmatter}

%\linenumbers

\newpage
\section{Introduction}
\label{intro}
Generalization in supervised learning relies on the fact that future samples should originate from the same underlying distribution as the ones used for training. However, this is not the case in settings where data is collected from different locations, different measurement instruments are used or there is only access to biased data. In these situations the labeled data does not represent the distribution of interest. This problem setting is referred to as a \emph{domain adaptation} setting, where the distribution of the labeled data is called the \emph{source domain} and the distribution that one is actually interested in is called the \emph{target domain}. Most often, data in the target domain is not labeled and adapting a source domain classifier, i.e. changing its predictions to be more suited to the target domain, is the only means by which one can make predictions for the target domain. Unfortunately, depending on the domain dissimilarity, adaptive classifiers can perform \emph{worse} than non-adaptive ones. In this work, we formulate a conservative adaptive classifier that always performs at least as well as the non-adaptive one.

Biased samplings tend to occur when one samples locally from a much larger population \cite{quionero2009dataset,moreno2012unifying}. For instance, in computer-assisted diagnosis, biometrics collected from two different hospitals will be different due to differences between the patient populations: ones diet might not be the same as the others. Nonetheless, both patient populations are subsamples of the human population as a whole. Adaptation in this example corresponds to accounting for the differences between patient populations, training a classifier on the corrected labeled data from one hospital, and applying the adapted classifier to the other hospital. Additionally, different measurement instruments cause different biased samplings: photos of the same object taken with different cameras lead to different distributions over images \cite{gong2012geodesic}. Lastly, biases arise when one only has access to particular subsets, such as data from individual humans in a activity recognition task \cite{hachiya2012importance}.

In the general setting, domains can be arbitrarily different and contain almost no mutual information, which means generalization will be extremely difficult. However, there are cases where the problem setting is more structured: in the \emph{covariate shift} setting, the marginal data distributions differ but the class-posterior distributions are equal \cite{shimodaira2000improving,cortes2008sample,bickel2009discriminative}. This means that the underlying true classification function is the same in both domains, implying that a correctly specified adaptive classifier converges to the same solution as the target classifier. Adaptation occurs by weighing each source sample by how important it is under the target distribution and training on the importance-weighed labeled source data. A model that relies on equal class-posterior distributions can perform very well when its assumption is true, but it can deviate in detrimental ways when its assumption is false. 

Considering their potential, a number of papers have looked at conditions and assumptions that allow for successful adaptation. A particular robust one specifies the existence of a common latent embedding, represented by a set of \emph{transfer components} \cite{pan2011domain}. After mapping data onto these components, one can train and test standard classsifiers again. Other possible assumptions include low-data-divergence \cite{ben2007analysis,ben2010impossibility,ben2010theory}, low-error joint prediction \cite{ben2010impossibility,ben2010theory}, the existence of a domain manifold \cite{gopalan2011domain,baktashmotlagh2014domain,patel2015visual}, restrictions to subspace transformations \cite{fernando2013unsupervised}, conditional independence of class and target given source data \cite{kouw2016feature} and unconfoundedness \cite{imbens2015causal}. The more restrictive an assumption is, the worse the classifier tends to perform when it is invalid. One of the strengths of the estimator that we develop in this paper is that it does not require making any assumptions on the relationship between the domains.

The domain adaptation and covariate shift settings are very similar to the sample selection bias setting in the statistics and econometrics communities \cite{heckman1977sample,zadrozny2004learning,bickel2009discriminative}. There, the bias is explicitly modeled as a variable that denotes how likely it is for a particular sample to be selected for the training set. One hopes to generalize to an unbiased sample, i.e., the case where each sample is equally likely to be selected. As such, this setting can also be viewed as a case of domain adaptation, with the biased sample set as the source domain and the unbiased sample set as the target domain. In this case, there is even additional information: the support of the source domain will be contained in the support of the target domain. This information can be exploited, as some methods rely on a non-zero target probability for every source sample \cite{cortes2008sample,gretton2009covariate}  Lastly, the causal inference community has also considered causes for differing training and testing distributions, including how to estimate and control for these differences \cite{storkey2009training,scholkopf2012causal,moreno2012unifying}.

\paragraph{}
Although not often discussed, a variety of papers have reported adaptive classifiers that, at times, perform worse than the non-adaptive source classifier \cite{cortes2008sample,gong2013connecting,baktashmotlagh2014domain,cortes2014domain,liu2014robust,kouw2016feature}. On closer inspection, this tends to happen when a classifier with a particular assumption is deployed in a problem setting for which this assumption is not valid. For example, if the assumption of a common latent representation does not hold or when the domains are too dissimilar to recover the transfer components, then mapping both source and target data onto the found transfer components will result in mixing of the class-conditional distributions \cite{pan2011domain}. Additionally, one of the most popular covariate shift approaches, kernel mean matching ({\sc kmm}), assumes that the support of the target distribution is contained in the support of the source distribution \cite{huang2007correcting,gretton2009covariate}. When this is not the case, the resulting estimated weights can become very bimodal: a few samples are given very large weights and all other samples are given near-zero weights. This greatly reduces the effective sample size for the subsequent classifier \cite{mcbook}. 

Since the validity of the aforementioned assumptions are difficult, if not impossible, to check, it is of interest to design an adaptive classifier that is at least guaranteed to perform as well as the non-adaptive one. Such a property is often framed as a minimax optimization problem in statistics, econometrics and game theory \cite{berger2013statistical}. Wen et al. constructed a minimax estimator for the covariate shift setting: Robust Covariate Shift Adjustment ({\sc rcsa}) \cite{wen2014robust} accounts for estimation errors in the importance weights by considering their worst-case configuration. However, this can sometimes be too conservative, as the worst-case weights can be very disruptive to the subsequent classifier optimization. Another minimax strategy, dubbed the Robust Bias-Aware ({\sc rba}) classifier \cite{liu2014robust}, plays a game between a risk minimizing target classifier and a risk maximizing target class-posterior distribution, where the adversary is constrained to pick posteriors that match the moments of the source distribution statistics. This constraint is important, as the adversary would otherwise be able to design posterior probabilities that result in degenerate classifiers (e.g. assign all class-posterior probabilities to $1$ for one class and $0$ for the other). However, it also means that their approach loses predictive power in areas of feature space where the source distribution has limited support, and thus is not suited very well for problems where the domains are very different. 

\paragraph{}
The main contribution of our paper is that we provide an empirical risk minimization framework to train a classifier that will always perform at least as well as the naive source classifier. Furthermore, we show that a discriminant analysis model derived from our framework will \emph{always be likelier} than the naive source model. To the best of our knowledge, strict improvements have not been shown before.

\paragraph{}
The paper continues as follows: section \ref{tcpr} presents the motivation and general formulation of our method, with the specific case of a least-squares classifier in section \ref{case_ls} and the specific case of a discriminant analysis classifier in section \ref{case_da}. Sections \ref{exp_ssb} and \ref{exp_da} show experiments on sample selection bias problems and general domain adaptation problems, respectively, and we conclude with discussing some limitations and implications in section \ref{disc}.

\section{Target Contrastive Pessimistic Risk}
\label{tcpr}
This section starts with the problem definition, followed by our risk formulation.

\subsection{Problem definition}
Given a sample space, a \emph{domain} refers to a particular probability measure over this sample space. One has access to labeled data from one domain, denoted the \emph{source} domain, and aims to generalize to another domain, denoted the \emph{target} domain, where no labels are available. Assuming that the labels follow a random variable ${\cal Y}$ taking values in the set $\{1, \dots, K\}$, let ${\cal S}$ denote the random variable associated with the source domain, with $n$ samples drawn from $p_{\cal S,Y}$, referred to as $\{(x_{i},y_{i})\}_{i=1}^{n}$, and let ${\cal T}$ denote the random variable associated with the target domain, with $m$ samples drawn from $p_{\cal Y,T}$, referred to as $\{(z_{j},u_{j})\}_{j=1}^{m}$. Both the source and target domain are measured in a $D$-dimensional vector space, on the same features. The target labels $u$ are unknown at training time and the goal is to predict them, using only the given unlabeled target samples $\{z_j\}_{j}^{m}$ and the given labeled source samples $\{(x_i,y_i)\}_{i}^{n}$.

\subsection{Target Risk}
The risk minimization framework formalizes \emph{risk}, or the expected loss $\ell$ incurred by classification function $h$, mapping data to classes $h^{\cal S} : {\cal S} \rightarrow {\cal Y}$, with respect to a particular joint labeled data distribution $p_{\cal S,Y} \left(x,y \right)$; $R(h)=\mathbb{E}_{p_{\cal S,Y}} \ \ell \left(h(x),y \right)$. By minimizing empirical risk, i.e. the approximation of the expectation with the sample average over labeled samples $\{(x_i,y_i)\}_i^n$, with respect to classifiers from a space of hypothetical classification functions $H$, one hopes to find the function that generalizes most to novel samples. Additionally, a regularization term that punishes classifier complexity is often incorporated to avoid finding classifiers that are too specific to the given labeled data. For a given data distribution, the choice of loss function, the hypothesis space and amount of regularization largely determine the behavior of the resulting classifier.

\paragraph{}
The empirical risk in the source domain can be computed as follows:
\begin{align}
	\hat{R} \left(h \given x,y \right) = \frac{1}{n} \sum_{i=1}^{n} \ell \left(h \given x_i, y_i \right) \, , \nonumber
\end{align}
with the \emph{source classifier} being the classifier that is found by minimizing this risk:
\begin{align}
	\hat{h}^{\cal S}= \underset{h \in H}{\arg \min} \ \hat{R} \left( h \given x,y \right) \, . \label{hS}
\end{align}
Since the source classifier does not incorporate any target data, it is essentially entirely naive of the target domain. But, if we assume that the domains are related in some way, then it makes sense to apply the source classifier on the target data. To evaluate $\hat{h}^{\cal S}$ in the target domain, the empirical \emph{target risk}, i.e. the risk of the classifier with respect to target samples, is measured:
\begin{align}
	\hat{R} \big( \hat{h}^{\cal S} \given z,u \big) = \frac{1}{m} \sum_{j=1}^{m} \ \ell \big( \hat{h}^{\cal S} \given z_j,u_j \big) \, . \label{hs_RT}
\end{align} 
Training on the source domain and testing on the target domain is our baseline, non-adaptive approach. Although the source classifier does not incorporate information from the target domain nor any knowledge on the relation between the domains, it is often \emph{not the worst} classifier. In cases where approaches rely heavily on assumptions, the adaptive classifiers can deviate from the source classifier in ways that lead to even larger target risks.

\subsection{Contrast}
We are interested in finding a classifier that is never worse than the source classifier in terms of the empirical target risk. We formalize this desire by subtracting the source classifiers target risk in (\ref{hs_RT}) from the target risk of a different classifier $h$:
\begin{align}
	\hat{R} \big( h \given z,u \big) \ - \ \hat{R} \big( \hat{h}^{\cal S} \given z,u \big) \label{contrast} 
\end{align}
If such a contrast is used as a risk minimization objective, i.e. $\underset{h \in H}{\min} \ \hat{R}(h \given z,u) \ - \ \hat{R}(\hat{h}^{\cal S} \given z,u)$,  then the risk of the resulting classifier is bounded above by the risk of the source classifier: the maximal value of the contrast is $0$, which occurs when the same classifier is found, $h = \hat{h}^{\cal S}$. Classifiers that lead to larger target risks are not valid solutions to the minimization problem, which implies that certain parts of the hypothesis space $H$ will never be reached. As such, the contrast implicitly constrains $H$ in a similar way as projection estimators \cite{Krijthe2017ProjectedEF}.

\subsection{Pessimism}
However, (\ref{contrast}) still incorporates the target labels $u$, which are unknown. Taking a conservative approach, we use a worst-case labeling instead, achieved by \emph{maximizing} risk with respect to a hypothetical labeling $q$. For any classifier $h$, the risk with respect to this worst-case labeling will always be larger than the risk with respect to the true target labeling:
\begin{align}
	\hat{R} \left(h \given z, u \right) \leq \underset{q}{\max} \ \hat{R} \left( h \given z, q \right) \, . \label{pess}
\end{align}
Unfortunately, maximizing over a set of discrete labels is a combinatorial problem and is computationally very expensive. To avoid this expense, we represent the hypothetical labeling probabilistically: $q_{kj} := p(y_j =k\given z_{j})$. Such a representation is sometimes also referred to as a \emph{soft} label \cite{kuncheva2001decision}. Additionally, it means that $q_j$ is constrained to be an element of a $K-1$ simplex, $\Delta_{K-1}$. For $m$ samples, there are $m$ simplices: $\Delta_{K-1} \times \Delta_{K-1} \dots = \Delta_{K-1}^{m}$. Note that known labels can also be represented probabilistically, for example $ y_i =1 \ \Leftrightarrow \ p( y_i = 1 \given x_i) = 1$, and are sometimes referred to as \emph{crisp} labels.

\subsection{Target Contrastive Pessimistic Risk}
Joining the contrastive target risk from (\ref{contrast}) with the pessimistic labeling $q$ from (\ref{pess}) forms the following risk function: 
\begin{align}
	\hat{R}^{\text{TCP}} \big( h \given \hat{h}^{\cal S}, z, q \big) = \ \frac{1}{m} \sum_{j=1}^{m} \ell \big(h \given z_j, q_j \big) - \ell \big( \hat{h}^{\cal S} \given z_j, q_j \big) \label{tcp} \, .
\end{align}
We refer to the risk in equation \ref{tcp} as the Target Contrastive Pessimistic risk ({\sc tcp}). Minimizing it with respect to a classifier $h$ and maximizing it with respect to the hypothetical labeling $q$, leads to the new {\sc tcp} target classifier:
\begin{align}
	\hat{h}^{\cal T} = \underset{h \in H}{\arg \min} \underset{q \in \Delta_{K-1}^m}{\max} \hat{R}^{\text{TCP}} \big( h \given \hat{h}^{\cal S},z,q \big) \, . \label{h_T}
\end{align}

\paragraph{}
Note that the {\sc tcp} risk expresses only the performance on the target domain. It is different from the ones used in \cite{liu2014robust} and \cite{wen2014robust}, because those incorporate the classifiers performance on the source domain as well. Our formulation contains no evaluation on the source domain, and focuses solely on the performance gain we can achieve  with respect to the source classifier.

\subsection{Optimization}
If the loss function $\ell$ is restricted to be globally convex and the hypothesis space H is a convex set, then the {\sc tcp} risk with respect to $h$ will be globally convex and there will be a unique optimum with respect to $h$. The {\sc tcp} risk with respect to $q$ is bounded linear due to the simplex, which means that it is possible that the optimum is not unique. Nonetheless, the combination is globally convex-linear and the existence of a saddle point, i.e. an optimum with respect to both $h$ and $q$, for the minimax objective is guaranteed \cite{cherukuri2017saddle}. 

Finding the saddle point can be done through first performing a gradient descent step according to the partial derivative with respect to $h$, followed by a gradient ascent step according to the partial derivative with respect to $q$. However, this last step causes the updated $q$ to leave the simplex. In order to enact the constraint, it is projected back onto the simplex after performing the gradient step. This projection ${\cal P}$ maps a point outside the simplex $a$ to the point on the simplex $b$ that is closest in terms of Euclidean distance: $\mathcal{P}(a) = \underset{b \in \Delta}{\arg \min} \| a - b \|_2$ \cite{chen2011projection,condat2014fast}. Unfortunately, the projection step complicates the computation of the step size, which we replace by a learning rate $\alpha^t$, decreasing over iterations $t$. This results in the overall update: $q^{t+1} \leftarrow {\cal P}(q^{t} + \alpha^{t} \nabla q^{t})$. Lastly, a gradient descent - gradient ascent procedure for globally convex-linear objectives is guaranteed to converge to the saddle point (c.f. proposition 4.4 and corollary 4.5 of \cite{cherukuri2017saddle}).

\section{Least-squares} \label{case_ls}
Discriminative classification models make no assumptions on the data distributions and directly optimize predictions. We incorporate a discriminative model through the least-squares classifier, which is defined by a quadratic loss function $\ell_{\text LS}(h \given x_i,y_i) = (h(x_i)  - y_i)^2$ \cite{friedman2001elements}. For multi-class classification, we employ a one-hot label encoding, also known as a one-vs-all scheme \cite{mohri2012foundations}.

Furthermore, we chose a linear hypothesis space, $h(z) = \sum_{d}^{D} z_d \theta_{kd} + \theta_{k0}$, which we will denote as the inner product $z \theta_k$ between the data row vector, implicitly augmented with a constant $1$, and the classifier parameter vector. $\theta$ is an element of a $(D+1) \times K$-dimensional parameter space $\Theta$ and in the following, we will refer to the classifier optimization step, i.e. minimization over $h \in H$, as a parameter estimation step, i.e. minimization over $\theta \in \Theta$. In summary, the least-squares loss of a sample is:
\begin{align}
\ell_{\text{LS}}(\theta \given z_j, q_j) =& \sum_{k=1}^{K} \left(z_{j}\theta_k - q_{kj} \right)^2 \label{ls} \, .
\end{align}
Plugging (\ref{ls}) into (\ref{tcp}), the {\sc tcp-ls} risk is defined as:
\begin{align}
	 \hat{R}^{\text{TCP}}_{\text{LS}}\big(\theta \given \hat{\theta}^{\cal S}, z, q) =& \ \frac{1}{m} \sum_{j=1}^{m} \ell_{\text{LS}}\big(\theta \given z_j, q_j) - \ell_{\text{LS}}\big(\hat{\theta}^{\cal S} \given z_j, q_j) \nonumber \\
	=& \ \frac{1}{m} \sum_{j=1}^{m} \sum_{k=1}^{K} \big(z_j \theta_k - q_{kj})^2 - \big(z_j \hat{\theta}_k^{\cal S} - q_{kj})^2 \, ,\nonumber
\end{align}
with the resulting estimate:
\begin{align}
	\hat{\theta}_{\text{LS}}^{\cal T} = \ \underset{\theta \in \Theta}{\arg \min} \underset{q \in \Delta_{K-1}^m}{\max} \hat{R}^{\text{TCP}}_{\text{LS}}(\theta \given \hat{\theta}^{\cal S}, z, q) \label{tcp-ls} \, .
\end{align}

\paragraph{}
For fixed $q$, the minimization over $\theta$ has a closed form solution. For each class, the parameter vector is:
\begin{align}
	\frac{\partial}{\partial \theta_k } \ \hat{R}^{\text{TCP}}_{\text{LS}}(\theta \given \hat{\theta}^{\cal S},z, q) =& \ 0 \nonumber \\
	\frac{1}{m} \sum_{j=1}^{m} 2 \ z_{j}^{\top} (z_{j} \theta_k - q_{kj}) \ =& \ 0 \nonumber \\
\theta_k \ =& \ \Big( \sum_{j=1}^{m} z_{j}^{\top} z_{j}\Big)^{-1}\Big( \sum_{j=1}^{m} z_{j}^{\top} q_{kj} \Big) \nonumber \, .
\end{align}

\paragraph{}
Keeping $\theta$ fixed, the gradient with respect to $q_{kj}$ is linear:
\begin{align}
	\frac{\partial}{\partial q_{kj}} \hat{R}^{\text{TCP}}_{\text{LS}}(\theta \given \hat{\theta}^{\cal S}, z, q) =& \ \frac{-2}{m} ( z_j \theta_k - q_{kj}) - \frac{-2}{m} (z_j \hat{\theta}_k^{\cal S} - q_{kj}) \nonumber \\
	=& \ \frac{2}{m} \big( z_j \hat{\theta}^{\cal S}_k - z_j \theta_k \big) \, . \nonumber
\end{align}
Algorithm \ref{algbox_ls} gives pseudo-code for {\sc tcp-ls}.

\begin{algorithm}[ht]
   \caption{{\sc tcp-ls}}
   \label{algbox_ls}
\begin{algorithmic}
   \STATE {\bfseries Input:} source data $x$ (size $n\times D$), labels $y$ (size $n\times K$), target data $z$ (size $m \times D$), learning rate $\alpha$, convergence criterion $\epsilon$.
   \STATE {\bfseries Output:} $\hat{\theta}^{\cal T}_{\text{LS}} = (\theta_1, \dots, \theta_K)$
   \STATE Initialize: 	$q_{kj} \leftarrow 1/K \quad \quad \forall k,j$ 
      \FORALL{classes}
	   \STATE	$\hat{\theta}_k^{\cal S} = \Big(\sum_{i}^{n} x_{i}^{\top} x_{i}\Big)^{-1}\Big( \sum_{i}^{n} x_{i}^{\top} y_{ki} \Big)$ 
   \ENDFOR
   \REPEAT   
   \FORALL{classes}
	     \STATE $\theta_k = \big( \sum_{j}^{m} z_{j}^{\top} z_{j}\big)^{-1}\big( \sum_{j}^{m} z_{j}^{\top} q_{kj}\big)$ 
	      \FORALL{samples}
	      	   \STATE 	$\nabla q_{kj} = 2 \big( z_j \hat{\theta}^{\cal S}_k - z_j \theta_k \big) / m$
	      	   \ENDFOR
  \ENDFOR
   \STATE $q^{t+1} \leftarrow {\cal P} \big(q^{t} - \alpha^{t} \nabla q^{t} \big)$
   \UNTIL{$ \| \ \hat{R}^{\text{TCP}}_{\text{LS}} \big(\theta^{t+1} \given \hat{\theta}^{\cal S},z,q^{t+1} \big) - \hat{R}^{\text{TCP}}_{\text{LS}} \big(\theta^{t} \given \hat{\theta}^{\cal S},z,q^{t} \big) \ \| \leq \epsilon$}
\end{algorithmic}
\end{algorithm}

\section{Discriminant Analysis} \label{case_da}
As a generative classification model, we chose the classical discriminant analysis model ({\sc da}). It fits a Gaussian distribution to each class, $\mathcal{N}(x,y \given \theta_k)$, and classifies new samples $x^*$ according to the largest probability over Gaussians; $h(z) = \underset{k}{\arg \max} \ \mathcal{N}(x^*,k \given \theta_k)$. Again, we will refer to the classifier optimization step as a parameter estimation step. For {\sc da} models, the parameter space $\Theta$ consists of priors, means and covariance matrices for the Gaussian distributions; $\theta_k = (\pi_k, \mu_k, \Sigma_k)$. The model is incorporated in the empirical risk minimization framework by setting the loss function to the negative log-likelihood, $\ell(\theta \given z_j, u_j) = -\log \mathcal{N}(z_j,u_j \given \theta_u)$. The probabilistic labeling $q$ is incorporated by weighing the likelihood over each class' Gaussian distribution: $\ell(\theta \given z_j,q_j) = \sum_{k}^{K} - q_{kj} \log \mathcal{N}(z_j,k \given \theta_k)$.

\subsection{Quadratic Discriminant Analysis}
If one fits one Gaussian distribution per class, the resulting classifier is a quadratic function of the difference in means and covariances, and is hence referred to as quadratic discriminant analysis ({\sc qda}):
\begin{align}
\ell_{\text{QDA}}(\theta & \given z_j, q_j) = \sum_{k=1}^{K} - q_{kj} \log \mathcal{N}(z_{j}, k \given \theta_k) \label{qda} 
\\ =& \sum_{k=1}^{K} - q_{kj} \big[ \log \pi_{k} - \frac{1}{2}\log\big[(2\boldsymbol{\pi})^{D} \det (\Sigma_k) \big] - \frac{1}{2} (z_{j} - \mu_k) \Sigma_k^{-1} (z_{j} - \mu_k)^{\top} \big] \nonumber \, ,
\end{align}
where $\det$ refers to the determinant and $\boldsymbol{\pi}$ refers to the irrational constant.

Plugging the loss from (\ref{qda}) into (\ref{tcp}), the {\sc tcp-qda} risk becomes:
\begin{align}
	\hat{R}_{\text{QDA}}^{\text{TCP}}(\theta \given \hat{\theta}^{\cal S}, z, q) =& \ \frac{1}{m} \sum_{j=1}^{m} \ \ell_{\text{QDA}}(\theta \given z_j, q_j) - \ell_{\text{QDA}}(\hat{\theta}^{\cal S} \given z_j, q_j) \nonumber \\
	=& \ \frac{1}{m} \sum_{j=1}^{m} \sum_{k=1}^{K} - q_{kj} \log \frac{\pi_{k} \ p(z_{j}, k \given \mu_{k}, \Sigma_k)}{\hat{\pi}^{\cal S}_{k} \ p(z_{j}, k \given \hat{\mu}^{\cal S}_{k}, \hat{\Sigma}_k^{\cal S})} \label{tcp-qda} \, \, ,
\end{align}
where the estimate itself is:
\begin{align}
	\hat{\theta}_{\text{QDA}}^{\cal T} = \underset{\theta \in \Theta}{\arg \min} \underset{q \in \Delta^{m}_{K-1}}{\max} \hat{R}_{\text{QDA}}^{\text{TCP}} (\theta \given \hat{\theta}^{\cal S}, z, q) \, . \nonumber
\end{align}

\paragraph{}
Minimization with respect to $\theta$ also has a closed-form solution for discriminant analysis models. For each class, the parameter estimates are:
\begin{align}
	\pi_{k} &= \frac{1}{m} \sum_{j=1}^{m} q_{kj} \, , \nonumber \\
	\mu_{k} &= \big( \sum_{j=1}^{m} q_{kj} \big)^{-1} \sum_{j=1}^{m} q_{kj} z_{j}  \, , \nonumber \\
	\Sigma_k &= \big( \sum_{j=1}^{m} q_{kj} \big)^{-1} \sum_{j=1}^{m} q_{kj}(z_{j} - \mu_{k})^{\top} (z_{j} - \mu_{k}) \, . \nonumber
\end{align}
One of the properties of a discriminant analysis model is that it requires the estimated covariance matrix $\Sigma_k$ to be non-singular. It is possible for the maximizer over $q$ in {\sc tcp-qda} to assign less samples than dimensions to one of the classes, causing the covariance matrix for that class to be singular. To prevent this, we regularize its estimation by first restricting $\Sigma_k$ to minimal eigenvalues of $0$ and then adding a scalar multiple of the identity matrix $\lambda I$. Essentially, the estimated covariance matrix is constrained to a minimum size in each direction. 

\paragraph{}
Keeping $\theta$ fixed, the gradient with respect to $q_{kj}$ is linear:
\begin{align}
	\frac{\partial}{\partial q_{kj}} \hat{R}^{\text{TCP}}_{\text{QDA}} \big( \theta \given \hat{\theta}^{\cal S},z,q) = \ - \frac{1}{m}\log \frac{\pi_{k} \ p(z_{j},k\given \mu_{k}, \Sigma_k)}{\hat{\pi}_{k}^{S} \ p(z_{j},k \given \hat{\mu}_{k}^{S}, \hat{\Sigma}_k^{S})} \, \, . \nonumber
\end{align}
Algorithm \ref{algbox_qda} lists pseudo-code for {\sc tcp-qda}.

\begin{algorithm}[ht]
   \caption{{\sc tcp-qda}}
   \label{algbox_qda}
\begin{algorithmic}
   \STATE {\bfseries Input:} source data $x$ (size $n\times D$), labels $y$ (size $n\times K$), target data $z$ (size $m \times D$), learning rate $\alpha$, convergence criterion $\epsilon$.
   \STATE {\bfseries Output:} $\hat{\theta}^{\cal T}_{\text{QDA}} = (\pi_1, \dots, \pi_K, \mu_1, \dots, \mu_K, \Sigma_1, \dots, \Sigma_K)$
   \STATE Initialize: $q_{kj} \leftarrow 1/K \quad \quad \forall k,j$  
   \FORALL{classes} 
	   \STATE $\hat{\pi}^{\cal S}_{k} = n^{-1} \sum_{i}^{n} y_{ki}$   
	   \STATE $\hat{\mu}^{\cal S}_{k} = \big( \sum_{i}^{n} y_{ki} \big)^{-1} \sum_{i}^{n} y_{ki}x_{i}$   
	   \STATE $\hat{\Sigma}_k^{\cal S} = \big( \sum_{i}^{n} y_{ki} \big)^{-1} \sum_{i}^{n} y_{ki}(x_{i} - \hat{\mu}^{\cal S}_{k})^{\top}(x_{i} - \hat{\mu}^{\cal S}_{k})$   
   \ENDFOR
%   \STATE $\hat{\theta}^{\cal S} = (\hat{\pi}_1^{\cal S}, \dots, \hat{\pi}_K^{\cal S}, \hat{\mu}_1^{\cal S}, \dots, \hat{\mu}_K^{\cal S}, \hat{\Sigma}_1^{\cal S}, \dots, \hat{\Sigma}_K^{\cal S})$
   \REPEAT   
   \FORALL{classes}
	   \STATE $\pi_{k} = m^{-1} \sum_{j}^{m} q_{kj} $   
	   \STATE $\mu_{k} = \big( \sum_{j}^{m} q_{kj} \big)^{-1} \sum_{j}^{m} q_{kj} z_{j}$   
	   \STATE $\Sigma_k = \big( \sum_{j}^{m} q_{kj} \big)^{-1}  \sum_{j}^{m} q_{kj} (z_{j} - \mu_{k})^{\top}(z_{j} - \mu_{k})$   
	   \FORALL{samples}
	      \STATE $\nabla q_{kj} = -\log \big[ \ \pi_{k} \ p(z_{j},k\given \mu_{k}, \Sigma_k) \ / \ \hat{\pi}^{\cal S}_{k} \ p(z_{j}, k \given  \hat{\mu}^{\cal S}_{k}, \hat{\Sigma}_k^{\cal S}) \ \big]$
	      \ENDFOR
   \ENDFOR
   \STATE $q^{t+1} \leftarrow {\cal P}(q^{t} - \alpha^{t} \nabla q^{t})$
   \UNTIL{$ \| \ \hat{R}^{\text{TCP}}_{\text{QDA}} \big(\theta^{t+1} \given \hat{\theta}^{\cal S}, z,q^{t+1} \big) - \hat{R}^{\text{TCP}}_{\text{QDA}} \big(\theta^{t} \given \hat{\theta}^{\cal S},z, q^{t} \big) \ \| \leq \epsilon$}
\end{algorithmic}
\end{algorithm}

\subsection{Linear Discriminant Analysis}
If one constrains the model to share a single covariance matrix for each class, the resulting classifier is a linear function of the difference in means and is hence termed linear discriminant analysis ({\sc lda}). This constraint is enforced through the weighted sum over class covariance matrices $\Sigma = \sum_{k}^{K} \pi_k \Sigma_k$. 

%\paragraph{} 
%The parameter estimates for {\sc lda} are exactly the same
%\begin{align}
%	\hat{\pi}_{k} &= \frac{1}{m} \sum_{j=1}^{m} q_{kj} \, , \nonumber \\
%	\hat{\mu}_{k} &= \big( \sum_{j=1}^{m}q_{kj} \big)^{-1} \sum_{j=1}^{m} q_{kj} z_{j}  \, , \nonumber \\
%	\hat{\Sigma} &= \sum_{k=1}^{K} \hat{\pi}_k \frac{1}{m} \sum_{j=1}^{m} q_{kj}(z_{j} - \hat{\mu}_{k})(z_{j} - \hat{\mu}_{k})^{\top} \, . \nonumber
%\end{align}
%
%\paragraph{} 
%Again, keeping $\hat{\theta}$ fixed, the gradient with respect to $q_{kj}$ is linear:
%\begin{align}
%	\frac{\partial}{\partial q_{kj}} \hat{R}^{\cal T}_{\text{LDA}}(h,\hat{h}^{\cal S}, q) = \ - \frac{1}{m}\log \frac{\hat{\pi}_{k} \ p(z_{j},k\given \hat{\mu}_{k}, \hat{\Sigma})}{\hat{\pi}_{k}^{S} \ p(z_{j},k \given \hat{\mu}_{k}^{S}, \hat{\Sigma}^{S})} \, \, . \nonumber
%\end{align}

\subsection{Performance Guarantee}
The discriminant analysis model has a very surprising property: it obtains a \emph{strictly} smaller risk than the source classifier. To our knowledge, this is the first time that such a performance guarantee can be given in the context of domain adaptation, without using any assumptions on the relation between the two domains. 
\begin{theorem}
For a continuous target distribution, with more unique samples than features for every class, $m_k > D$, the empirical target risk of a discriminant analysis model $\hat{R}_{\text{DA}}$ with {\sc tcp} estimated parameters $\hat{\theta}^{\cal T}$ is strictly smaller than the empirical target risk of a discriminant analysis model with parameters $\hat{\theta}^{\cal S}$ estimated on the source domain:
\begin{align}
	\hat{R}_{\text{DA}} \big( \hat{\theta}^{\cal T} \given z,u \big) \ < \ \hat{R}_{\text{DA}} \big( \hat{\theta}^{\cal S} \given z,u \big) \nonumber
\end{align}
\end{theorem}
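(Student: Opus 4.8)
The plan is to reduce the theorem to a single statement about the value of the minimax problem (\ref{h_T}), namely that it is strictly negative. First I would record the elementary identity that, because the subtracted source term in the {\sc tcp} risk (\ref{tcp}) does not depend on the evaluating classifier, plugging the \emph{true} target labelling $u$ into the {\sc tcp} risk reproduces exactly the contrast of empirical target risks:
\begin{align}
\hat{R}^{\text{TCP}}\big(\hat{\theta}^{\cal T} \given \hat{\theta}^{\cal S}, z, u\big) = \hat{R}_{\text{DA}}\big(\hat{\theta}^{\cal T} \given z,u\big) - \hat{R}_{\text{DA}}\big(\hat{\theta}^{\cal S} \given z,u\big) \, . \nonumber
\end{align}
Since $u$ is a crisp labelling and hence a particular point of the simplex $\Delta_{K-1}^m$, the pessimism inequality (\ref{pess}) gives $\hat{R}^{\text{TCP}}(\hat{\theta}^{\cal T} \given \hat{\theta}^{\cal S}, z, u) \le \max_{q} \hat{R}^{\text{TCP}}(\hat{\theta}^{\cal T} \given \hat{\theta}^{\cal S}, z, q)$, and by the definition of $\hat{\theta}^{\cal T}$ the right-hand side equals the minimax value $\min_{\theta}\max_{q}\hat{R}^{\text{TCP}}$. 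Thus it suffices to prove that this minimax value is strictly below $0$; the whole difficulty is then concentrated in the word \emph{strictly}.

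The upper bound of $0$ is immediate: evaluating the inner maximum at the source parameters gives $\max_q \hat{R}^{\text{TCP}}(\hat{\theta}^{\cal S} \given \hat{\theta}^{\cal S}, z, q) = 0$, because the {\sc tcp} risk vanishes identically in $q$ when $\theta = \hat{\theta}^{\cal S}$, so $\min_\theta\max_q \hat{R}^{\text{TCP}} \le 0$. To extract strict negativity I would pass to the saddle point $(\theta^{*}, q^{*})$, whose existence is guaranteed by the convex--linear structure already invoked after (\ref{h_T}). Because all saddle points of a convex--concave problem share the common minimax value, it is enough to evaluate the risk at one of them:
\begin{align}
\min_\theta \max_q \hat{R}^{\text{TCP}} = \frac{1}{m}\sum_{j=1}^{m}\ell\big(\theta^{*} \given z_j, q^{*}_j\big) - \frac{1}{m}\sum_{j=1}^{m}\ell\big(\hat{\theta}^{\cal S} \given z_j, q^{*}_j\big) \, . \nonumber
\end{align}
The first saddle condition states that $\theta^{*}$ minimises $\theta \mapsto \tfrac1m\sum_j \ell(\theta \given z_j, q^{*}_j)$, i.e. $\theta^{*}$ is the weighted maximum-likelihood {\sc da} fit to the target sample under the soft labels $q^{*}$, whose closed form is exactly the update in Algorithm \ref{algbox_qda}. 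The assumptions enter here: continuity of the target distribution together with $m_k > D$ guarantee that every class covariance $\Sigma_k(q^{*})$ is non-singular, so the weighted log-likelihood is finite and \emph{strictly} convex in the natural parametrisation and its minimiser $\theta^{*}$ is unique.

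The decisive step is to argue $\theta^{*} \neq \hat{\theta}^{\cal S}$. Given uniqueness of the minimiser, $\hat{\theta}^{\cal S}$ can attain the same weighted loss as $\theta^{*}$ only if $\hat{\theta}^{\cal S}$ itself equals the weighted target fit, that is only if some soft labelling of the target sample reproduces the source means and covariances exactly; since $\hat{\mu}^{\cal S}_k, \hat{\Sigma}^{\cal S}_k$ are fixed quantities computed from the source data while $\mu_k(q^{*}), \Sigma_k(q^{*})$ are weighted statistics of target points drawn from a continuous law, this coincidence occurs with probability zero and is excluded by the continuity hypothesis. Hence $\hat{\theta}^{\cal S}$ is a feasible but non-optimal point of the strictly convex inner problem, so $\tfrac1m\sum_j \ell(\theta^{*}\given z_j,q^{*}_j) < \tfrac1m\sum_j \ell(\hat{\theta}^{\cal S}\given z_j, q^{*}_j)$, the minimax value is strictly negative, and the chain above yields the claim. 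I expect this last step---ruling out that an adversarial labelling can exactly regenerate the source fit---to be the main obstacle; the continuity and $m_k>D$ conditions are precisely what make it rigorous, and the single-class ($K=1$) case, where $\theta^{*}$ is simply the target sample mean and covariance and manifestly differs almost surely from the source estimate, is a useful sanity check to present first.
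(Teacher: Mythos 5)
Your reduction is sound and is essentially the paper's: plugging the crisp labelling $u$ into the {\sc tcp} risk gives the contrast of empirical risks, this is bounded above by the minimax value, and the minimax value is at most $0$ because the objective vanishes identically in $q$ at $\theta = \hat{\theta}^{\cal S}$. The gap is in your decisive step, where you rule out $\theta^{*} = \hat{\theta}^{\cal S}$ ``with probability zero'' merely because the source statistics are fixed quantities while $\mu_k(q^{*}), \Sigma_k(q^{*})$ are weighted statistics of continuously distributed target points. This does not follow: $q^{*}$ is not a fixed labelling but an adversarial, data-dependent one, and as $q$ ranges over $\Delta_{K-1}^{m}$ the achievable weighted fits form a continuum --- for instance, the achievable class-$k$ weighted means $\big(\sum_j q_{kj}\big)^{-1}\sum_j q_{kj} z_j$ fill the entire convex hull of $\{z_j\}_{j=1}^{m}$, a set in which the fixed point $\hat{\mu}_k^{\cal S}$ lies with positive probability whenever the domains overlap. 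So continuity of the target law alone does not make the coincidence a null event; you need a structural constraint on which parameter vectors are reachable by weighted target fits at all.

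That constraint is exactly the lemma your proposal is missing and the one the paper's proof is built on: because $\sum_{k} q_{kj} = 1$ for every sample $j$, the prior-weighted total mean of any weighted target fit is invariant over the whole simplex,
\begin{align}
\sum_{k=1}^{K} \pi_k(q) \, \mu_k(q) \ = \ \frac{1}{m} \sum_{j=1}^{m} z_j \sum_{k=1}^{K} q_{kj} \ = \ \frac{1}{m} \sum_{j=1}^{m} z_j \nonumber \, ,
\end{align}
whereas the source model's total mean $\sum_k \hat{\pi}^{\cal S}_k \hat{\mu}^{\cal S}_k$ is the source sample average. Hence $\theta^{*} = \hat{\theta}^{\cal S}$ would force the two sample averages to coincide exactly, which is a single event of probability zero under continuous distributions. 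Once this identity is inserted, your saddle-point route does close, and it is a somewhat different finish from the paper's (the paper never invokes strict convexity or uniqueness of the weighted maximum-likelihood fit; it argues directly that equality of the two risks implies equality of the total means and reduces to the same null event). Without it, the claimed almost-sure non-coincidence --- which you yourself flag as the main obstacle --- is unsupported, and the proof as written does not go through.
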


The reader is referred to \nameref{proof} for the proof. It follows similar steps as a robust guarantee for discriminant analysis in semi-supervised learning \cite{loog2016contrastive}. It should be noted that the risks of {\sc tcp-lda} and {\sc tcp-qda} are always strictly smaller with respect to the given target samples, but not necessarily strictly smaller with respect to new target samples. Although, when the given target samples are a good representation of the target distribution, one does expect the adapted model to generalize well to new target samples. Additionally, as long as the same amount of regularization $\lambda$ is added to both the source $\hat{\theta}^{\cal S}$ and the {\sc tcp} classifier $\hat{\theta}^{\cal T}$, the guarantee also holds for a regularized model.

\section{Experiments}
Our experiments compare the risks of the {\sc tcp} classifiers with that of the source classifier and the corresponding oracle target classifier, as well as their performance with respect to various state-of-the-art domain adaptive classifiers through their areas under the ROC-curve. In all experiments, all target samples are given, unlabeled, to the adaptive classifiers. They make predictions for those given target samples and their performance is evaluated with respect to those target samples' true labels. Cross-validation for regularization parameters was done by holding out source data, as that is the only data for which labels are available at training time. The range of values we tested was $\begin{bmatrix} 0 \ 10^{-6} \ 10^{-5} \ 10^{-4} \ 10^{-3} \ 10^{-2} \ 10^{-1} \ 10^{0} \ 10^{1} \ 10^{2} \ 10^{3} \end{bmatrix}$.

\subsection{Compared methods}
We implemented transfer component analysis ({\sc tca}) \cite{pan2011domain}, kernel mean matching ({\sc kmm}) \cite{huang2007correcting}, robust covariate shift adjustment ({\sc rcsa}) \cite{wen2014robust} and the robust bias-aware ({\sc rba}) classifier  \cite{liu2014robust} for the comparison (see cited papers for more information). {\sc tca} and {\sc kmm} are chosen because they are popular classifiers with clear assumptions. {\sc rcsa} and {\sc rba} are chosen because they also employ minimax formulations but from different perspectives; {\sc rcsa} as a worst-case and {\sc rba} as a moment-matching importance weighing. Their implementations details are discussed shortly below.

\paragraph{Transfer Component Analysis}
{\sc tca} assumes that there exists a common latent representation for both domains and aims to find this representation by means of a cross-domain nonlinear component analysis \cite{pan2011domain}. In our implementation, we employ a radial basis function kernel with a bandwidth of $1$ and set the trade-off parameter $\mu$ to $1/2$. After mapping the data onto their transfer components, we train a logistic regressor on the mapped source data and apply it to the mapped target data.

\paragraph{Kernel Mean Matching}
{\sc kmm} assumes that the class-posterior distributions are equal in both domains and that the support of the target distribution is contained within the source distribution \cite{huang2007correcting,gretton2009covariate}. When the first assumption fails, {\sc kmm} will have deviated from the source classifier in a manner that will not lead to better results on the target domain. When the second assumptions fails, the variance of the importance-weights increases to the point where a few samples receive large weights and all other samples receive very small weights, reducing the effective training sample size and leading to pathological classifiers. We use a radial basis function kernel with a bandwidth of $1$, kernel regularization of $0.001$ to favor estimates with lower variation over weights and upper bound the weights by $10 \ 000$. After estimating importance weights, we train a weighed least-squares classifier on the source samples.
 
 \paragraph{Robust Covariate Shift Adjustment} 
{\sc rcsa} also assumes equal class-posterior distributions and containment of the support of the target distribution within the source distribution, but additionally incorporates a worst-case labeling \cite{wen2014robust}. To be precise, it maximizes risk with respect to the importance weights. We used the author's publicly available code with 5-fold cross-validation for its hyperparameters. Interestingly, the authors also discuss a relation between covariate shift and model misspecification, as described by \cite{white1981consequences}. They argue for a two-step (estimate weights - train classifier) approach in a game-theoretical form \cite{grunwald2004game,sugiyama2005model,wen2014robust}, which is done by all importance-weighted classifiers in this paper.
 
 \paragraph{Robust Bias-Aware}
 {\sc rba} assumes that the moments of the feature statistics are approximately equal up to a particular order \cite{liu2014robust}. In their formulation, the adversary plays a classifier whose class-posterior probabilities are used as a labeling of the target samples, but who is also constrained to match the moments with the source domain's statistics. The player then proposes an importance-weighted classifier that aims to perform well on both domains. Note that the constraints on the adversary are, among others, necessary to avoid the players switching strategies constantly. We implement {\sc rba} using first-order feature statistics for the moment-matching constraints, which was also done by the authors in their paper. Furthermore, we use a ratio of normal distributions for the weights and bound them above by $1000$.

\subsection{Experiments in a sample selection bias setting} \label{exp_ssb}
Sample selection bias settings occur when data is collected locally from a larger population. For regression problems, these settings are usually created through a parametric sampling of the feature space \cite{shimodaira2000improving,huang2007correcting}. We created something similar but for classification problems: samples are concentrated around a certain subset of the feature space, but with equal class priors as the whole set. For each class:
\begin{enumerate}
\item Find the sample closest to the origin; $x_0$.
\item Compute distance $d(x_0,x_k)$ to all other samples of the same class.
\item Draw without replacement  $\pi_k n^{\cal S}$ samples proportional to $\exp(-d(x_0,x_k))$.
\end{enumerate}
where $n^{\cal S}$ denotes the total number of samples to draw and $\pi_k$ refers to the class-prior distributions of the whole set. Note that drawing $\pi_k n$ samples from each class leads to approximately the same class prior distributions in the source domain as the target domain. We chose the squared Mahalanobis distance: $d(x_0,x_k) := (x_0 - x_k) \Sigma^{-1} (x_0 - x_k)^{\top}$, with the covariance matrix estimated on all data, since that takes scale differences between features into account. Figure \ref{ssb_nn} presents an example, showing the first two principal components of the pima diabetes dataset. Red/blue squares denote the selected source samples, black circles denote all samples and the green stars denote the seed points ($x_0$ for each class).
\begin{figure}[ht]
\centering
\includegraphics[width=.9\textwidth]{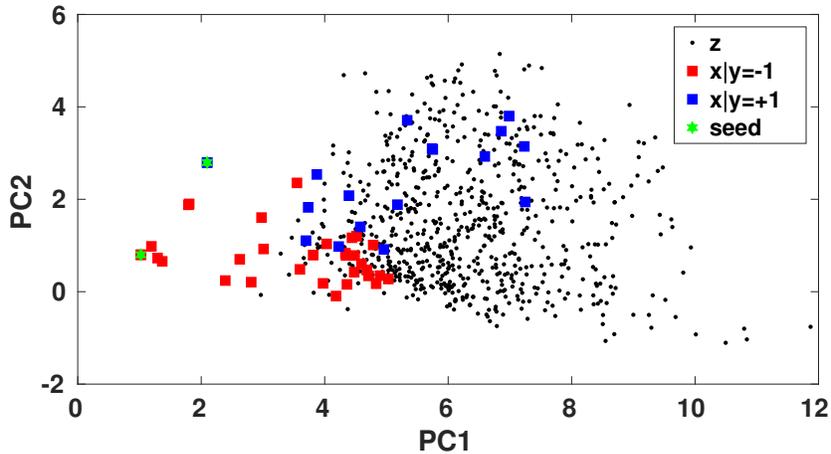}
\caption{Example of a biased sampling. Shown are the first two principal components of the pima diabetes dataset, with all target samples in black, the selected source samples in red/blue and the samples closest to $0$ of each class in green (seeds).}
\label{ssb_nn}
\end{figure}

\subsubsection{Data}
We collected the following datasets from the UCI machine learning repository: cylinder bands printing (bands), car evaluation (car), credit approval (credit), ionosphere (iono), mammographic masses (mamm), pima diabetes (pima) and tic-tac-toe endgame (t3). Table \ref{ssb_char} lists their characteristics. All missing values have been imputed to $0$. For each dataset, we draw $n^{\cal S}=50$ samples as the source domain while treating all samples as the target domain.
\begin{table}[ht]
\caption{Sample selection bias datasets characteristics.}
\label{ssb_char}
\setlength{\tabcolsep}{2pt}
\centering
\begin{tabular}{l}
	\\
\midrule
bands 	\\
car 		\\
credit	\\
iono		\\
mamm	\\
pima	\\
t3 		\\
\end{tabular} \ 
\begin{tabular}{ c c c c }
 \#Samples & \#Features & \#Missing & Class (-1$\vert$+1)  \\
\midrule
 539		& 39		& 569	& 312 $\vert$ 227 		\\
 1728	& 6 	 	& 0	  	& 1210 $\vert$ 518	\\
 690		& 15		& 67		& 307 $\vert$ 383		\\
 351		& 34		& 0		& 126 $\vert$ 225		\\
 961		& 5		& 162	& 516 $\vert$ 445		\\
 768		& 8		& 0		& 500 $\vert$ 268		\\
 958		& 9 	 	& 0	     	& 332 $\vert$ 626	
\end{tabular}
\end{table}

\subsubsection{Results}
The risks (average negative log-likelihoods for the discriminant analysis models and mean squared errors for the least-squares classifiers) in Table \ref{ssb_anll} belong to the source classifiers, the {\sc tcp} classifiers and the oracle target classifiers. The oracles represent the best possible result, as they comprise the risk of a classifier trained on all target samples with their true labels. The results show varying degrees of improvement for the {\sc tcp} classifiers. {\sc tcp-lda} approaches {\sc t-lda} more closely than the other two versions, with {\sc tcp-ls} being the most conservative one. For the ionosphere and tic-tac-toe datasets, the improvement is quite dramatic, indicating that the source classifier is a poor model for the target domain. Note also that some overfitting might be occurring as {\sc tcp-qda} does not always have a lower risk than {\sc tcp-lda}, even though {\sc t-qda} does always have a lower risk than {\sc t-lda}. 
\begin{table}[ht]
\caption{Risks (average negative log-likelihoods and mean squared errors) of the naive source classifiers ({\sc s-lda}, {\sc s-qda}, {\sc s-ls}), the tcp classifiers ({\sc tcp-lda}, {\sc tcp-qda}, {\sc tcp-ls}) and the oracle target classifiers ({\sc t-lda}, {\sc t-qda}, {\sc t-ls}) on the sample selection bias datasets.}
\label{ssb_anll}
\setlength{\tabcolsep}{1pt}
\centering
\begin{tabular}{l}
	\\
\midrule
bands 	\\
car 		\\
credit	\\
iono		\\
mamm	\\
pima	\\
t3 		\\
\end{tabular} \ 
\begin{tabular}{ c c c }
 {\sc s-lda} & {\sc tcp-lda} & {\sc t-lda}\\
\midrule
-216.3 	& -218.4		& -218.8	 	\\
 17.16	& 2.850		& 2.148		\\
-80.04	& -83.64 	 	& -83.65	 	\\
 199.5 	& -8.480		&  -8.782	 	 \\
 8.133	& -10.40		& -11.22	 	\\
-15.92 	& -23.44	 	& -24.15	 	\\
 18.77	& 6.136		& 4.734
\end{tabular} \hfill
\begin{tabular}{c c c }
 {\sc s-qda} & {\sc tcp-qda} & {\sc t-qda}\\
\midrule
-215.3 	& -217.8		& -219.1	 	\\
57.39	&  18.77		&  2.049		\\
 -78.99	& -83.73 	 	& -84.61	 	\\
 26.30	& -9.325		& -18.78	 	 \\
 31.66	& -10.08		& -11.68	 	\\
-7.486 	& -23.09	 	& -24.30	 	\\
 117.3	& 39.13 		& 4.611
\end{tabular} \hfill
\begin{tabular}{c c c }
 {\sc s-ls} & {\sc tcp-ls} & {\sc t-ls}\\
\midrule
1.170 	& 1.109		& 0.827	 	\\
1.968	& 1.205		& 0.672 		\\
2.430	& 0.973 	 	& 0.757	 	\\
17.06 	& 0.815		& 0.350	 	 \\
 0.818	& 0.668		& 0.580	 	\\
1.083 	& 1.012	 	& 0.633	 	\\
 1.401	& 1.401		& 0.849 	 	
\end{tabular}
\end{table}

%Additionally, we constructed a learning curve for the pima diabetes dataset (see Figure \ref{ssb_lc}). On the y-axis we plot the risk, i.e. the average negative log-likelihood, on the whole dataset. The black line indicates the oracle performance, the risk of {\sc t-qda} trained on the fully labeled target set, and the blue line indicates the risk of {\sc s-qda} trained with $n^{\cal S}$ samples on the target set. The red line indicates {\sc tcp-qda} which receives $n^{\cal S}$ source samples and an additional $n^{\cal S}$ unlabeled target samples. It is benefiting from the information in the unlabeled samples, as it converges much faster to the oracle solution.
%\begin{figure}[ht]
%\centering
%\includegraphics[width=.9\textwidth]{pima_lc_tas_risk.eps}
%\caption{Learning curve for the pima diabetes dataset. Average negative log-likelihoods of the source classifier ({\sc s-qda}), the tcp classifier ({\sc tcp-qda}) and the oracle target classifier ({\sc t-qda}) as a function of the number of training samples.}
%\label{ssb_lc}
%\end{figure}

Table \ref{ssb_auc} compares the performances of the adaptive classifiers on all datasets through their area under the ROC-curves (AUC). Although there is quite a variety between datasets, the variation between classifiers within a dataset is relatively small; all approaches perform similarly well. However, with our selection bias procedure, the moments of the target statistics do not match the source statistics (e.g. the target's variance is by construction always larger) which affect {\sc rba}'s performance negatively. Interestingly, the {\sc tcp} discriminant analysis models are quite competitive in cases where their improvement over the source classifier was larger. Unfortunately, like {\sc rba}, the more conservative {\sc tcp-ls} never outperforms all other methods simultaneously on any of the datasets. Still, in the average it reaches competitive performance overall. In summary, the {\sc tcp} classifiers perform on par with the other adaptive classifiers.
\begin{table}[ht]
\caption{Sample selection bias datasets. Areas under the ROC-curves for a range of domain adaptive classifiers.}
\label{ssb_auc}
\setlength{\tabcolsep}{6pt}
\centering
\begin{tabular}{l}
	\\
\midrule
bands 	\\
car 		\\
credit	\\
iono		\\
mamm	\\
pima	\\
t3 		\\
\midrule
mean %\\
%sem 
\end{tabular} \ 
\begin{tabular}{c c c c c c c c}
 {\sc tca} &{\sc kmm} & {\sc rcsa} & {\sc rba} &  {\sc tcp-ls} & {\sc tcp-lda} & {\sc tcp-qda} \\
\midrule
 .578		& {\bf .620}	& .562 		& .504 		& .588		&  .548		& .589		\\
 .736		& {\bf .776}	& .742		& .684		& .734		&  .758		& .699		\\
 {\bf .716}	& .694		& .655 		& .702		& .662		&  .646		& .663		\\
 .741		& .817		& .835 		& .687		& .731		&  {\bf .894}	& .826		\\
 .656		& .804		& .749		& .762		& .836		& .824		& {\bf .847}	\\
 .691		& .630		& {\bf .760} 	& .271		& .692		& .684		& .637		\\
{\bf .608}		& .532		& .439 		& .446		& .520		& .529 		& .606 		\\
\midrule
 .675		& .696		& .677 		& .579		& .680		& {\bf .698}	& .695		%\\
%{\bf .024}		& .041		& .052 		& .067		& .039		& .052 		& .039 		
\end{tabular}
\end{table}

\subsection{Experiments in a domain adaptation setting} \label{exp_da}
We performed a set of experiments on a dataset that is naturally split into multiple domains: predicting heart disease in patients from hospitals in 4 different locations. It is a much more realistic setting because problem variables such as prior shift, class imbalance and proportion of imputed features are not controlled. As such, it is a harder problem than the sample selection bias setting. In this setting, the target domains often only have limited overlap with the source domain and can be very dissimilar. As the results will show, many of the assumptions that the state-of-the-art domain adaptive classifiers rely upon, do not hold and their performance degrades drastically.

\subsubsection{Data}
The hospitals are the Hungarian Institute of Cardiology in Budapest (data collected by Andras Janosi), the University Hospital Zurich (collected by William Steinbrunn), the University Hospital Basel (courtesy of Matthias Pfisterer), the Veterans Affairs Medical Center in Long Beach, California, USA, and the Cleveland Clinic Foundation in Cleveland, Ohio, USA (both courtesy of Robert Detrano), which will be referred to as Hungary, Switzerland, California and Ohio hereafter. The data from these hospitals can be considered domains as the patients are all measured on the same biometrics but show different distributions. For example, patients in Hungary are on average younger than patients from Switzerland (48 versus 55 years). Each patient is described by 13 features: age, sex, chest pain type, resting blood pressure, cholesterol level, high fasting blood sugar, resting electrocardiography, maximum heart rate, exercise-induced angina, exercise-induced ST depression, slope of peak exercise ST, number of major vessels in fluoroscopy, and normal/defective/reversible heart rate. Table \ref{hdis_props} describes the number of samples ($n$, $m$), total number of missing measurements that have been imputed ($mis_{\cal S}$, $mis_{\cal T}$) the class balance ($c_{\cal S}$, $c_{\cal T}$) and the empirical Maximum Mean Discrepancy for all pairwise combinations of designating one domain as the source and another as the target. First of all, the sample size imbalance is not really a problem, as the largest difference occurs in the Ohio - Switzerland combination with 303 and 123 samples respectively. However, the fact that the classes are severely imbalanced in different proportions, for example going from 54\% : 46\% to 7\% : 93\% in Ohio - Switzerland, creates a very difficult setting. A shift in the prior distributions can be disastrous for some classifiers, such as {\sc rba} which relies on matching the source and target feature statistics. Furthermore, a sudden increase in the amount of missing values (unmeasured patient biometrics), such as in Ohio - California, means that a classifier relying on a certain feature for discrimination degrades when this feature is missing in the target domain. Additionally, the empirical Maximum Mean Discrepancy measures how far apart two sets of samples are: $\hat{\text{MMD}} = \| n^{-1} \sum_{i}^{n} \phi(x_i) - m^{-1} \sum_{j}^{m} \phi(z_j) \|^{2} = n^{-2} \sum_{i,i'}^n K(x_i,x_{i'} ) - 2(nm)^{-1} \sum_{i,j} K(x_i,z_j ) + m^{-2} \sum_{j,j'}^{m} K(z_j,z_{j'} )$ \cite{huang2007correcting}. For its kernel, we used a radial-basis function with a bandwidth of 1. An MMD of $0$ means that the two sets are identical, while larger values indicate larger discrepancies between the two sets. The combinations Ohio - Switzerland and Switzerland - Hungary have an MMD that is two orders of magnitude larger than other combinations. Overall, looking at all three sets of descriptive statistics, the combinations Ohio - Switzerland and Switzerland - Hungary should pose the most difficulty for the adaptive classifiers.

Lastly, to further illustrate how the domains differ, we plotted histograms of the age and resting blood pressure of all patients, split by domain (see Figure \ref{domains}). Not only are they different on average, they tend to differ in variance and skewness as well.
\begin{table}[ht]  
\caption{Heart disease dataset properties, for all pairwise domain combinations (O='Ohio', C='California', H='Hungary' and S='Switzerland'). ${\cal S}$ denotes the source and ${\cal T}$ the target domain, $n$ the amount of source and $m$ the amount of target samples, $c_{\cal S}$ the class balance (-1,+1) in the source domain and $c_{\cal T}$ the class balance in the target domain. MMD denotes the empirical Maximum Mean Discrepancy between the source and target data.}    
\label{hdis_props}
\centering                               
\begin{tabular}{l l }
${\cal S}$ & ${\cal T}$ \\
\midrule
O & H \\
O & S \\
O & C \\
H & S \\
H & C \\
S & C \\
H & O \\
S & O \\
C & O \\
S & H \\
C & H  \\
C & S 
\end{tabular} \quad                
\begin{tabular}{ c c c c c c c}
 n & m & $mis_{\cal S}$ & $mis_{\cal T}$ & $c_{\cal S}$ & $c_{\cal T}$ & MMD\\
\midrule
 303		& 294	& 6		& 782	& 164:139	& 188:106	& 0.0012	\\
 303 		& 123 	& 6 		& 273	& 164:139	& 8:115		& 0.1602	\\
 303 		& 200 	& 6 		& 698	& 164:139 	& 51:149 		& 0.0227 \\
 294		& 123	& 782 	& 273	& 188:106	& 8:115		& 0.1384	\\
 294		& 200	& 782	& 698	& 188:106 	& 51:149		& 0.0151	\\
 123		& 200	& 273	& 698	& 8:115		& 51:149		& 0.0804	\\
 294		& 303	& 782	& 6		& 188:106	& 164:139	& 0.0012	\\
 123		& 303	& 273	& 6		& 8:115		& 164:139	& 0.1602	\\
 200		& 303	& 698	& 6		& 51:149		& 164:139	& 0.0227	\\
 123		& 294	& 273	& 782	& 8:115		& 188:106	& 0.1384	\\	
 200 		& 294	& 698	& 782	& 51:149		& 188:106	& 0.0151	\\
 200		& 123	& 698	& 273	& 51:149		& 8:115		& 0.0804	
\end{tabular} 
\end{table}

%\begin{figure}%
%\centering
%\begin{minipage}{0.48\textwidth}%
%	\includegraphics[width=1\textwidth]{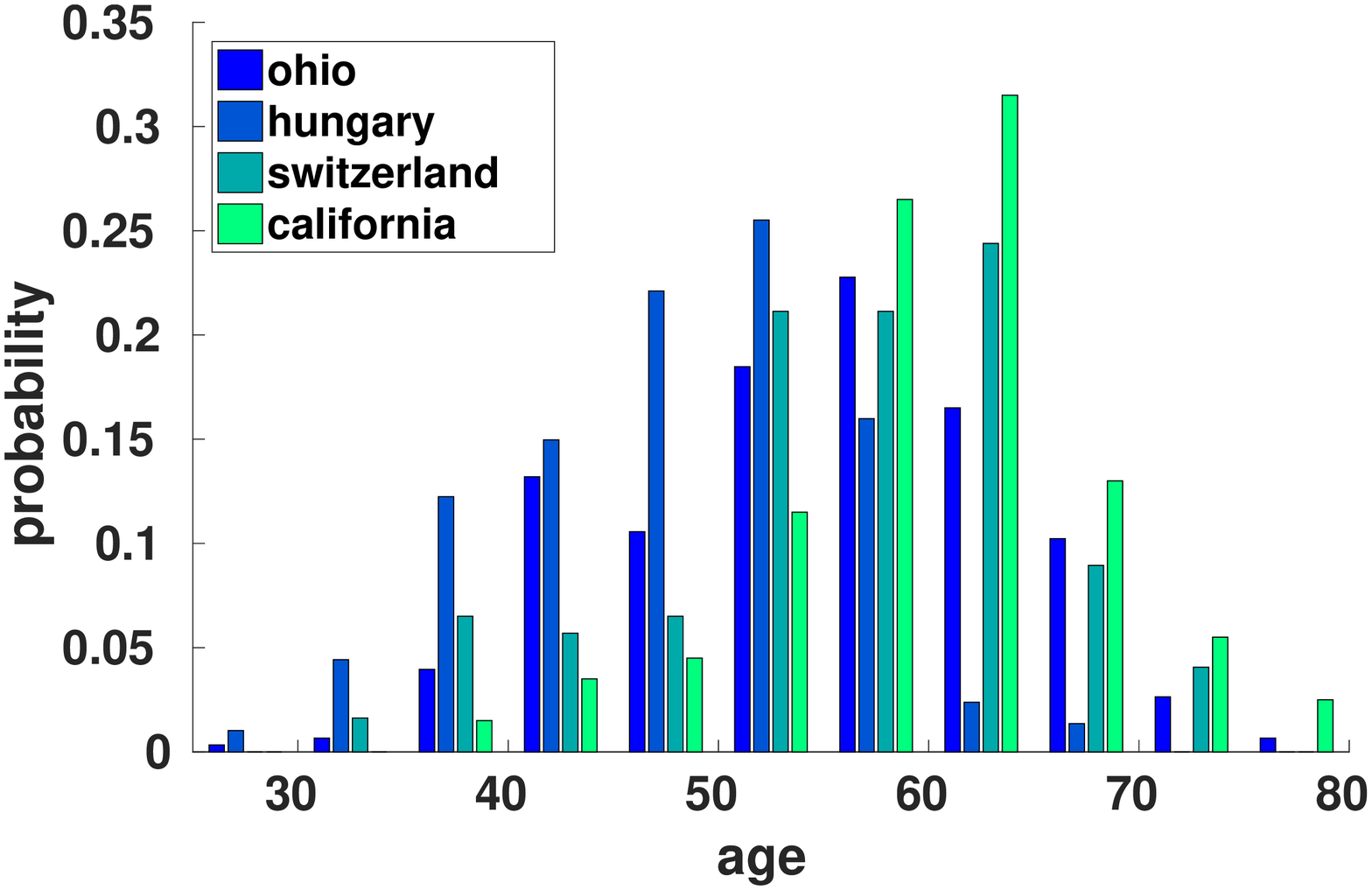} \\
%	\includegraphics[width=1\textwidth]{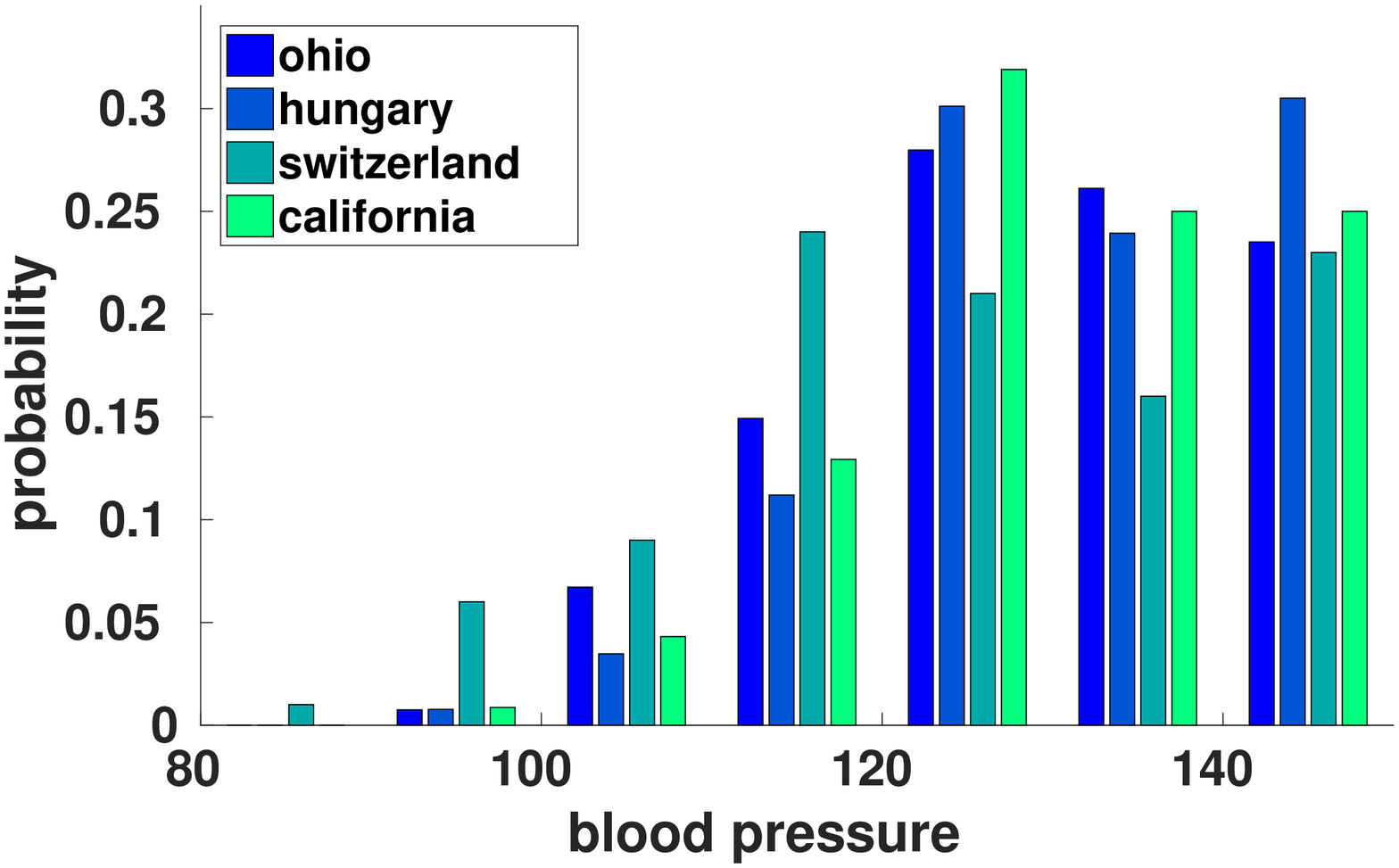} 
%\end{minipage}%
%\hfill
%\parbox{0.48\textwidth}{
%\includegraphics[width=.5\textwidth]{scatter_domains.eps}
%}%
%\caption{(Left top) Histogram of the age of patients in each domain, (left bottom) histogram of the resting blood pressure of patients in each domain. (Right) Scatterplot of the first two principal components of the heart disease dataset split by domains.}
%\label{domains}%
%\end{figure}

\begin{figure}
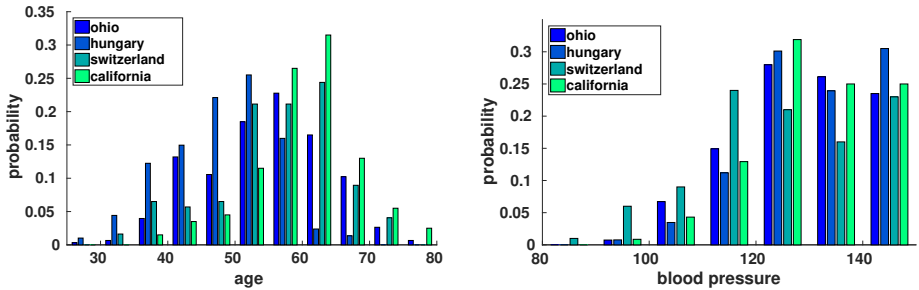
%
\centering
\includegraphics[width=.48\textwidth]{histogram_domains_age.eps} \hfill
\includegraphics[width=.48\textwidth]{histogram_domains_bp.eps} 
\caption{(Left top) Histogram of the age of patients in each domain, (right) histogram of the resting blood pressure of patients in each domain.}
\label{domains}%
\end{figure}

\subsubsection{Results}
Table \ref{hdis_anll_mse} lists the target risks (average negative log-likelihoods for the discriminant analysis models and mean squared errors for the least-squares classifiers) with the given target samples' true labels for all source, {\sc tcp} and oracle target classifiers. Note that the {\sc tcp} risks range between the source and the oracle target risk. For some combinations {\sc tcp} is extremely conservative, e.g. Switzerland - Ohio, Switzerland - Hungary for the least-squares case, and for others, it is much more liberal, e.g. Hungary - Switzerland, Hungary - Ohio, Hungary - California for the discriminant analysis models. In general, the discriminative model ({\sc tcp-ls}) deviates much less and is much more conservative than the generative models ({\sc tcp-lda} and {\sc tcp-qda}). Note that the order of magnitude of the improvement with {\sc tcp-da} in the Hungary - Switzerland, Hungary - Ohio and Hungary - California settings is due to the fact that the two domains lie far apart; the target samples lie very far in the tails of the source models' Gaussian distribution and evaluate to very small likelihoods, which become very large negative log-likelihoods.

\begin{table}[ht]  
\caption{Heart disease dataset. Target risks (average negative log-likelihoods (left, middle) and mean squared errors (right)) for all pairwise combinations of domains (O='Ohio', C='California', H='Hungary' and S='Switzerland'; smaller values are better).}    
\label{hdis_anll_mse}
\setlength{\tabcolsep}{1pt}
\centering                                               
\begin{tabular}{l l }
${\cal S}$ & ${\cal T}$ \\
\midrule
O & H \\
O & S \\
O & C \\
H & S \\
H & C \\
S & C \\
H & O \\
S & O \\
C & O \\
S & H \\
C & H  \\
C & S
\end{tabular} \hfill    
\begin{tabular}{ c c c }                                       
 {\sc s-lda} & {\sc tcp-lda} & {\sc t-lda}\\
\midrule
 -53.55 	& -57.18 	& -57.35 	\\
 -8.293 	& -16.76 	& -17.54 	\\
 -37.84	& -53.88 	& -54.69 	\\
 -12.50 	& -16.08 	& -17.54 	\\
 -41.70 	& -53.91 	& -54.69 	\\
 494.9	& -54.49 	& -54.69 	\\
 -48.91 	& -55.08 	& -55.23 	\\
 709.9 	& -54.07 	& -55.23 	\\
 -49.21 	& -55.00 	& -55.23 	\\
 649.9 	& -56.09 	& -57.35 	\\
 -53.05 	& -57.19 	& -57.35 	\\
 -15.45	& -17.43 	& -17.54
\end{tabular} \hfill    
\begin{tabular}{ c c c }                                       
 {\sc s-qda} & {\sc tcp-qda} & {\sc t-qda}\\
\midrule
 -53.55 	& -57.20 	& -57.62 \\
 -8.293 	& -16.76 	& -17.54 \\
 -37.83 	& -53.73	& -54.89 \\
 -12.80 	& -16.44 	& -17.54 \\
 -40.08 	& -54.45 	& -54.89 \\
 498.9 	& -54.44 	& -54.89 \\
 -49.20 	& -54.84 	& -55.53 \\
 709.9 	& -54.10 	& -55.53 \\
-49.17 	& -55.05	& -55.53 \\
 650.3 	& -56.19 	& -57.62 \\
-53.15	& -57.17 	& -57.62 \\
-15.47 	& -17.44 	& -17.54`
\end{tabular} \hfill  
\begin{tabular}{  c c c }                                       
 {\sc s-ls} & {\sc tcp-ls} & {\sc t-ls}\\
\midrule
 0.580 & 0.579 & 0.444 \\
 1.449 & 1.449 & 0.213 \\
 1.441 & 1.441 & 0.613 \\
 1.068 & 1.068 & 0.213 \\
 1.120 & 1.104 & 0.613 \\
 0.904 & 0.904 & 0.671 \\
 0.642 & 0.638 & 0.463 \\
 1.700 & 1.700 & 0.696 \\
 1.833 & 1.833 & 0.470 \\
 2.102 & 2.102 & 0.740 \\
 0.582 & 0.582 & 0.444 \\
 0.415 & 0.415 & 0.236
\end{tabular}
\end{table} 

Looking at the areas under the ROC-curves in Figure \ref{hdis_auc}, one observes a different pattern in the classifier performances. {\sc tca}, {\sc kmm}, {\sc rcsa} and {\sc rba} perform much worse, often below chance level. It can be seen that, in some cases, the assumption of equal class-posterior distributions still holds approximately, as {\sc kmm} and {\sc rcsa} sometimes perform quite well, e.g. in Hungary - Ohio. {\sc tca}'s performance varies around chance level, indicating that it is difficult to recover a common latent representation in these settings. That makes sense, as the domains lie further apart this time. {\sc rba}'s performance drops most in cases where the differences in priors and proportions of missing values are largest, e.g. Hungary - California, which also makes sense as it is expecting similar feature statistics in both domains. {\sc tcp-ls} performs very well in almost all cases; the conservative strategy is paying off. {\sc tcp-lda} is also performing very well, even outperforming {\sc tcp-qda} in all cases. The added flexibility of a covariance matrix per class is not beneficial because it is much more difficult to fit correctly. Note that the domain combinations are asymmetrical; for example, {\sc rcsa}'s performance is quite strong when Switzerland is the source domain and Ohio the target domain, but it's performance is much weaker when Ohio is the source domain and Switzerland the target domain. In some combinations, assumptions on how two domains are related to each other might be valid that are not valid in their reverse combinations. Overall, in this more general domain adaptation setting, our more conservative approach works best, as shown by the mean performances.
\begin{table}[ht]   
\caption{Heart disease dataset. Area under the ROC-curve for all pairwise combinations of domains (O='Ohio', C='California', H='Hungary' and S='Switzerland'; larger values are better.}    
\label{hdis_auc}
\centering                 
\setlength{\tabcolsep}{5pt}                              
\begin{tabular}{l l }
${\cal S}$ & ${\cal T}$ \\
\midrule
O & H \\
O & S \\
O & C \\
H & S \\
H & C \\
S & C \\
H & O \\
S & O \\
C & O \\
S & H \\
C & H  \\
C & S \\
\midrule
\multicolumn{2}{l}{mean} %\\
%\multicolumn{2}{l}{sem} 
\end{tabular} \quad
\begin{tabular}{c c c c c c c c}   
 {\sc tca} & {\sc kmm} & {\sc rcsa} & {\sc rba}  & {\sc tcp-ls} &  {\sc tcp-lda} & {\sc tcp-qda} \\
\midrule
 .699		& .710		& .372		& .481		& .881		& {\bf .882}	& .817	 	\\
 .590		& .551		& .634		& .670		& {\bf .714}	& .671 		& .671	 	\\
 .496		& .476		& .560		& .450		& {\bf .671}	& .668		& .476 		\\
 .455		& .501		& .646		& .602		& {\bf .668}	& .665 		& .666	 	\\
 .528		& .533		& .585		& .434		& {\bf .727}	& .709 		& .662 		\\
 .475		& .573		& .464		& .603		& {\bf .605}	& .546		& .480 		\\
 .616		& .742		& .751		& .510		& .864		& {\bf .876}	& .863	 	\\	
 .582		& .353		& .750		& .449		& {\bf .753}	& .589 		& .426	 	\\
 .484		& .337		& .551		& .557		& .671		& {\bf .831}	& .828		\\
 .407		& .370		& .629		& .484		& .697		& {\bf 724}	& .604	 	\\
 .472		& .427		& .538		& .616		& .805		& {\bf .878}	& .824		\\
 .511		& .593		& .462		& .474		& {\bf .709}	& .503		& .535		\\
 \midrule
 .526		& .514		& .578		& .528		& {\bf .730}	& .712		& .654		%\\
% {\bf .024}	& .038		& .033		& {\bf .024}	& {\bf .024}	& .038		& .044		
\end{tabular}
\end{table}

\subsubsection{Visualization of the worst-case labeling}
The adversary in {\sc tcp}'s minimax formulation maximizes the objective with respect to the probability $q_{kj}$ that a sample $j$ belongs to class $k$. However, note that the worst-case labeling corresponds to the labeling that maximizes the contrast: it looks for the labeling for which the difference between the source parameters and the current parameters is largest. It would be interesting to visualize this labeling at the saddle point. Figure \ref{wc_lab} shows the first two principal components of Hungary, with the probabilities of belonging to class $1$, i.e. $q_{j k=1}$. The top left figure shows the true labeling, the top right the probabilities for {\sc tcp-ls}, the bottom left for {\sc tcp-lda} and the bottom right for {\sc tcp-qda}. In all three {\sc tcp} cases the labeling is quite smooth and does not vary too much between neighboring points. One would expect a rough labeling, but note that labellings that are bad for the source classifier will most likely also be bad for the {\sc tcp} classifier, and the resulting contrast will be small instead of maximal. The probabilities for {\sc tcp-ls} lie closer to $0$ and $1$ than for {\sc tcp-lda} and {\sc tcp-qda}.

\begin{figure}[ht]
\centering
\includegraphics[width=.45\textwidth]{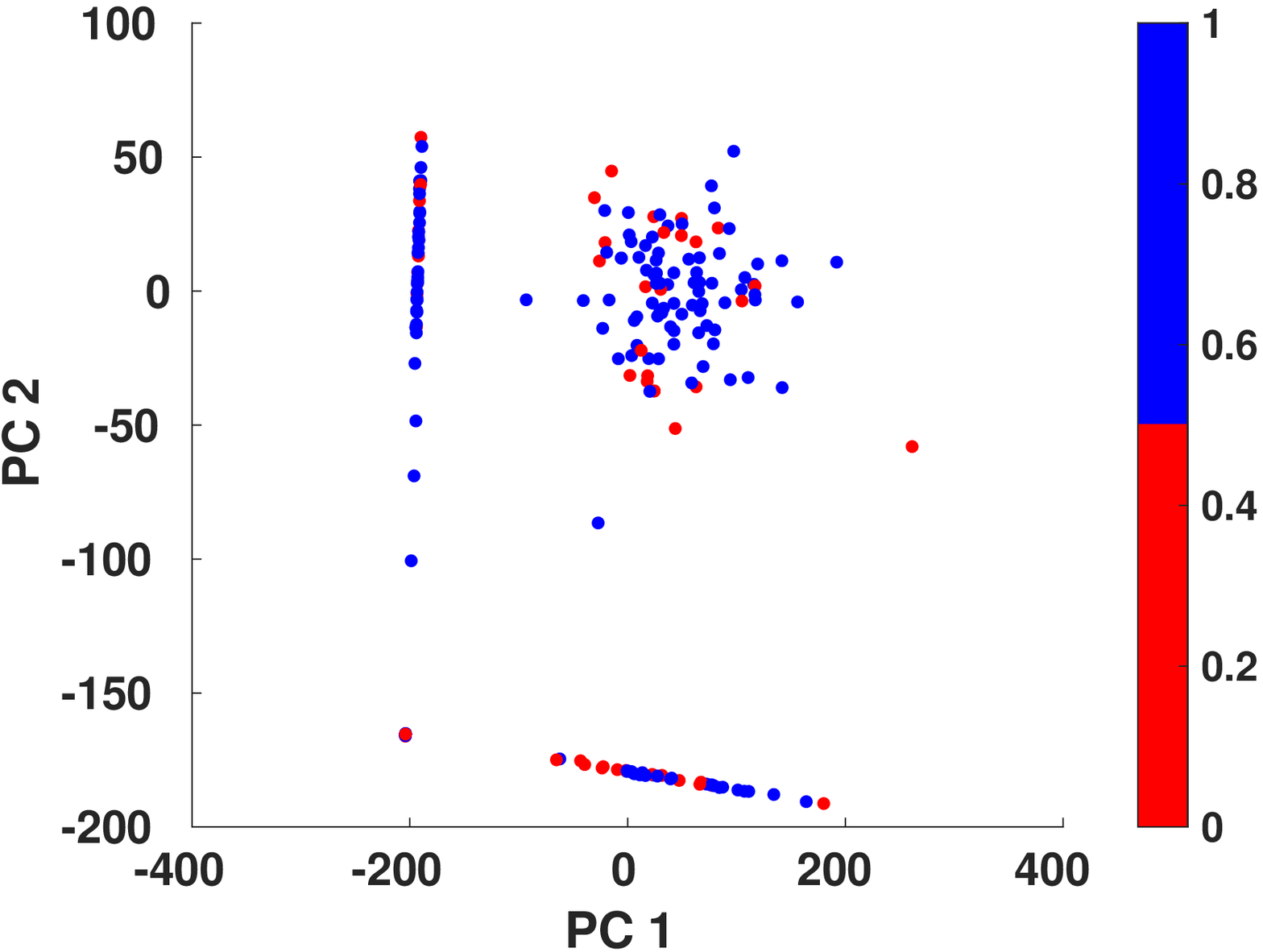} \hfill
\includegraphics[width=.45\textwidth]{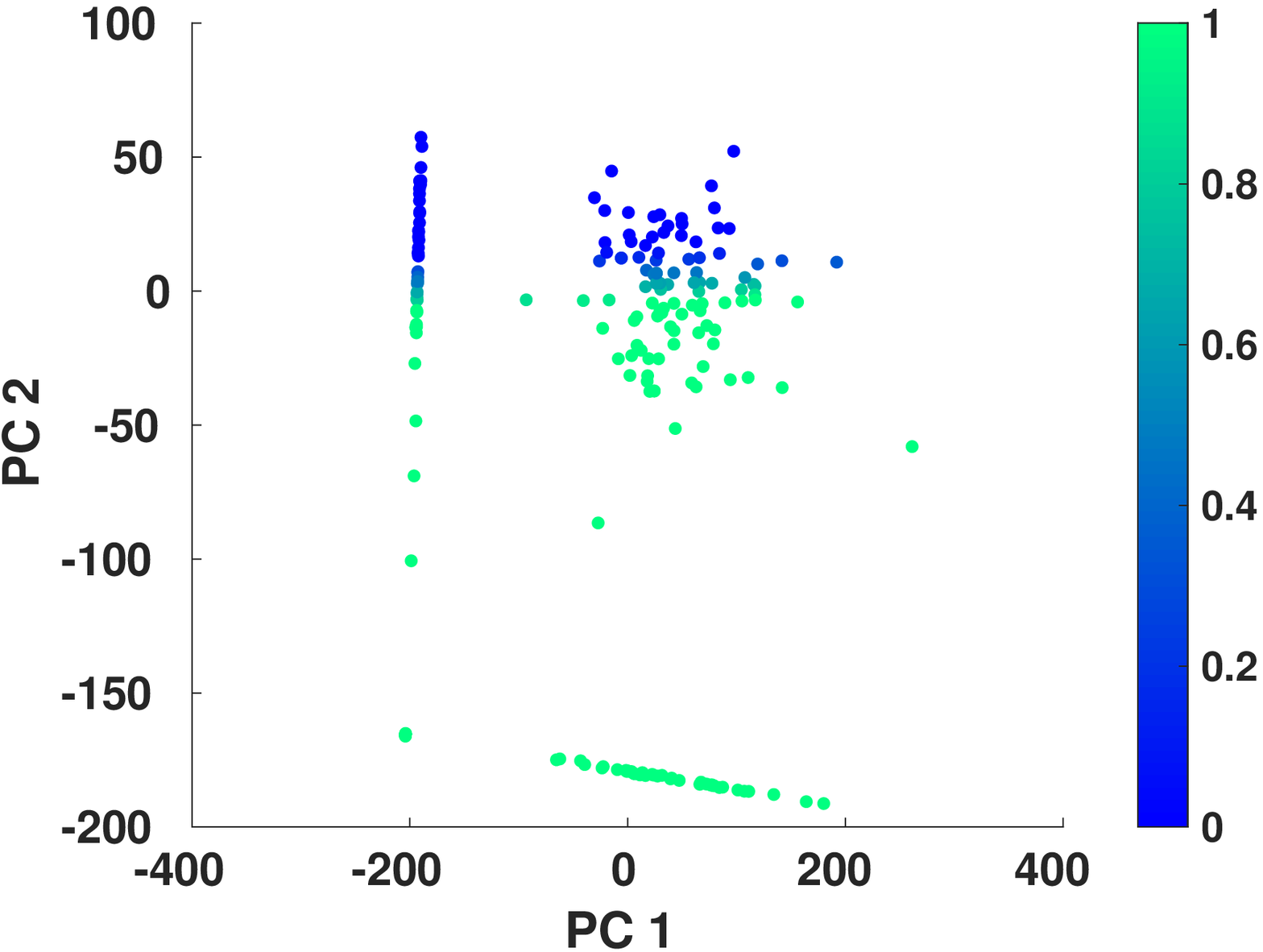} \\
\includegraphics[width=.45\textwidth]{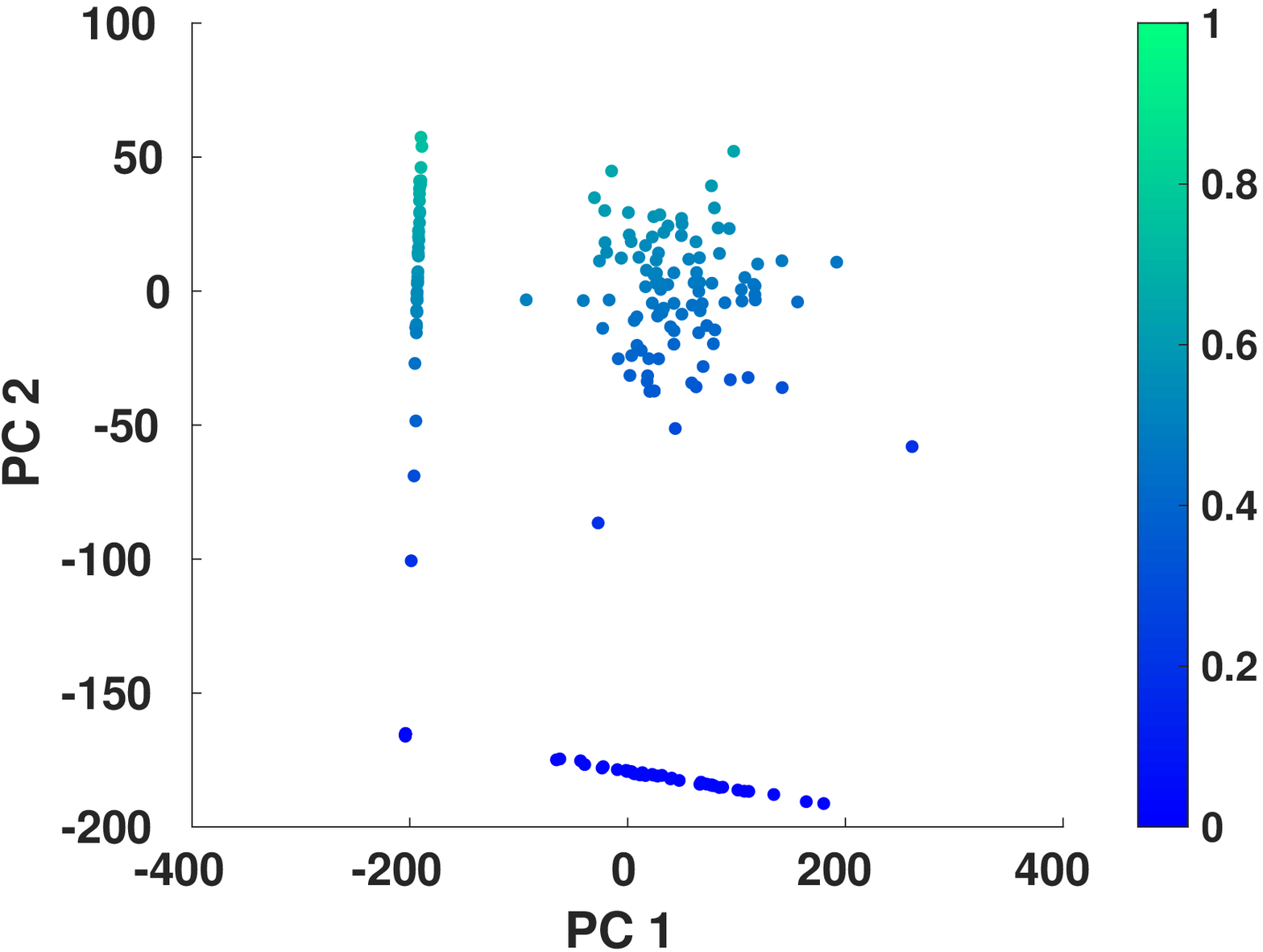} \hfill
\includegraphics[width=.45\textwidth]{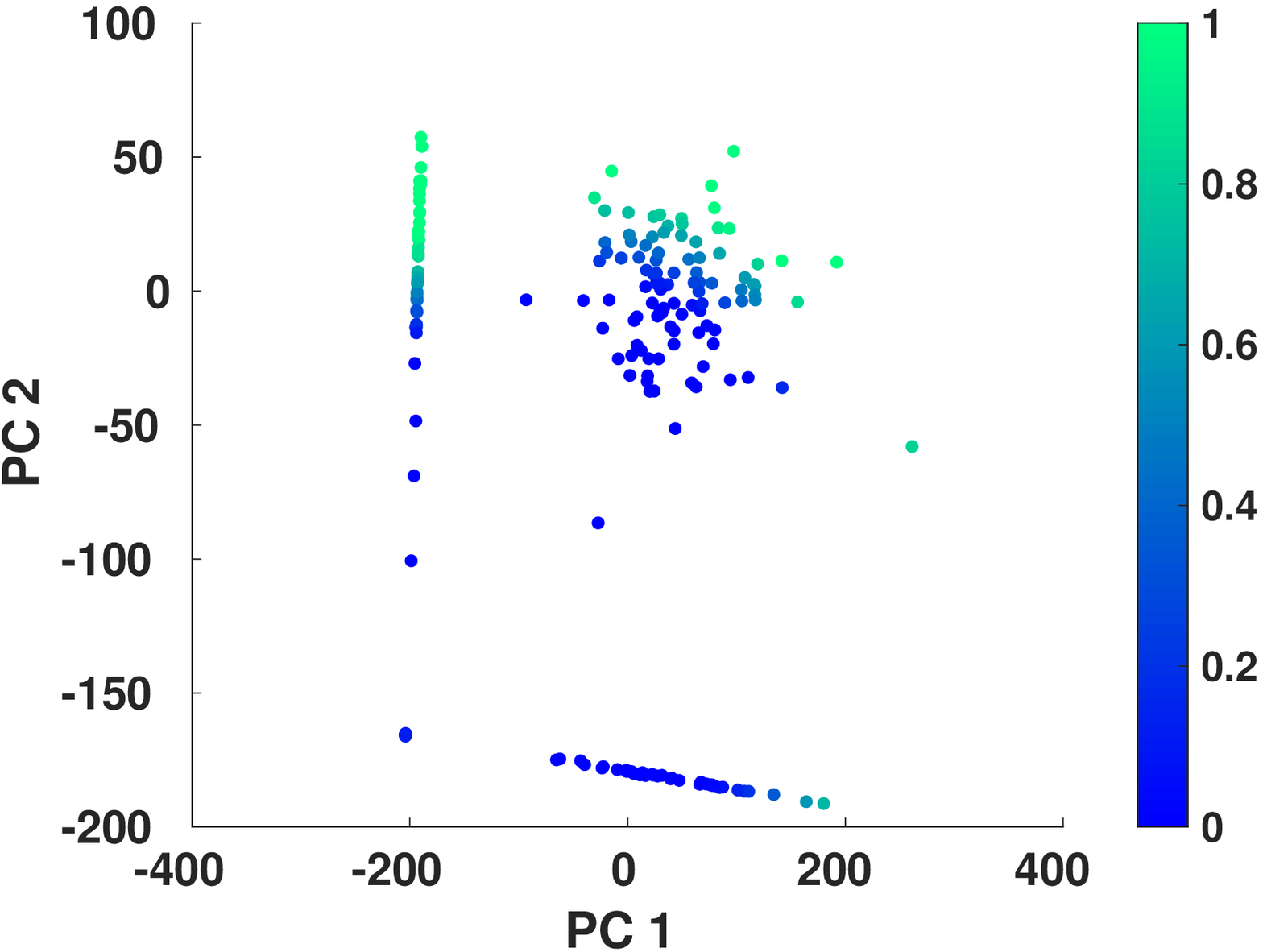}
\caption{Scatter plots of the first two principal components of Hungary in the heart disease dataset. (Top left) True labeling, (top right) $q_{k=1}$ for {\sc tcp-ls}, (bottom left) $q_{k=1}$ for {\sc tcp-lda}, (bottom right) $q_{k=1}$ for {\sc tcp-qda}.}
\label{wc_lab}
\end{figure}

\section{Discussion} \label{disc}
Although the {\sc tcp} classifiers are never worse than the source classifier by construction, they will not automatically lead to improvements in the error rate. This is due to the difference between optimizing a surrogate loss and evaluating the $0$/$1$-loss \cite{bartlett2003prediction, bartlett2006convexity, loog2015semi}. There is no one-to-one mapping between the minimizer of the surrogate loss and the minimizer of the $0$/$1$-loss. 

One peculiar advantage of our {\sc tcp} model is that we do not explicitly require source samples at training time. They are not incorporated in the risk formulation, which means that they do not have to be retained in memory. It is sufficient to receive the parameters of a trained classifier that can serve as a baseline. Our approach is therefore more efficient than for example importance-weighing techniques which require source samples for importance-weight estimation and subsequent training. Additionally, it would be interesting to construct a contrast with multiple source domains. The union of source classifiers might serve as a very good starting point for the {\sc tcp} model.

For each adaptive classifier in this paper, regularization parameters are estimated through cross-validation on held-out source samples. However, this procedure is known to be biased as it does not account for domain dissimilarity \cite{sugiyama2007covariate,kouw2016regularization}. What is optimal with respect to held-out source samples, need not be optimal with respect to target samples. Performance of many adaptive models might be improved with appropriate adaptive validation techniques.

\section{Conclusion}
We have designed a risk minimization formulation for a domain-adaptive classifier whose performance, in terms of risk, is always at least as good as that of the non-adaptive source classifier. Furthermore, for the discriminant analysis case, its performance is always strictly better. Our target contrastive pessimistic model performs on par with state-of-the-art domain adaptive classifier on sample selection bias settings and outperforms them on more realistic domain adaptation problem settings.

\section*{Acknowledgments} 
This work was supported by the Netherlands Organization for Scientific Research (NWO; grant 612.001.301). 

\section*{Appendix A} \label{proof}
\begin{theorem*}
For a continuous target distribution, with more unique samples than features for every class, $m_k > D$, the empirical target risk of a discriminant analysis model $\hat{R}_{\text{DA}}$ with {\sc tcp} estimated parameters $\hat{\theta}^{\cal T}$ is strictly smaller than the empirical target risk of a discriminant analysis model with parameters $\hat{\theta}^{\cal S}$ estimated on the source domain:
\begin{align}
	\hat{R}_{\text{DA}} \big( \hat{\theta}^{\cal T} \given z,u \big) \ < \ \hat{R}_{\text{DA}} \big( \hat{\theta}^{\cal S} \given z,u \big) \nonumber
\end{align}
\end{theorem*}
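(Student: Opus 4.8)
The plan is to rewrite the claimed inequality as a statement about the {\sc tcp} contrast evaluated at the \emph{true} labeling, and then squeeze that contrast between the worst-case labeling and zero using the saddle-point structure. First, I would observe that the discriminant analysis risk $\hat{R}_{\text{DA}}$ is built from exactly the negative-log-likelihood loss that enters the contrast, so the difference of the two target risks \emph{is} the {\sc tcp} risk evaluated at the crisp true labels $u$:
\begin{align}
	\hat{R}_{\text{DA}}\big(\hat{\theta}^{\cal T}\given z,u\big) - \hat{R}_{\text{DA}}\big(\hat{\theta}^{\cal S}\given z,u\big) = \hat{R}^{\text{TCP}}\big(\hat{\theta}^{\cal T}\given\hat{\theta}^{\cal S},z,u\big) \, . \nonumber
\end{align}
It therefore suffices to show that this single quantity is strictly negative.

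Next I would invoke the saddle point $(\hat{\theta}^{\cal T},q^{\star})$, whose existence is guaranteed by the convex-linear structure of the objective. Since the true labels $u$ are one-hot vectors and hence vertices of $\Delta_{K-1}^{m}$, maximality of $q^{\star}$ gives
\begin{align}
	\hat{R}^{\text{TCP}}\big(\hat{\theta}^{\cal T}\given\hat{\theta}^{\cal S},z,u\big) \ \le \ \max_{q\in\Delta_{K-1}^m}\hat{R}^{\text{TCP}}\big(\hat{\theta}^{\cal T}\given\hat{\theta}^{\cal S},z,q\big) = \hat{R}^{\text{TCP}}\big(\hat{\theta}^{\cal T}\given\hat{\theta}^{\cal S},z,q^{\star}\big) \, . \nonumber
\end{align}
Because $\hat{R}^{\text{TCP}}(\hat{\theta}^{\cal S}\given\hat{\theta}^{\cal S},z,q)=0$ identically in $q$, the minimax value satisfies $\hat{R}^{\text{TCP}}(\hat{\theta}^{\cal T}\given\hat{\theta}^{\cal S},z,q^{\star})\le\max_q\hat{R}^{\text{TCP}}(\hat{\theta}^{\cal S}\given\hat{\theta}^{\cal S},z,q)=0$, which already recovers the weak (non-strict) inequality.

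The crux is upgrading the last step to a strict inequality, which I would extract from the inner minimization at the fixed labeling $q^{\star}$. Minimizing the contrast over $\theta$ with $q^{\star}$ held fixed is equivalent to minimizing the $q^{\star}$-weighted negative log-likelihood of the target sample, whose minimizer is precisely the weighted Gaussian maximum-likelihood estimate written in closed form above. The hypothesis $m_k>D$ together with continuity of the target distribution forces every weighted class-covariance to be positive definite, so this log-likelihood is strictly concave and its maximizer $\hat{\theta}^{\cal T}$ is unique. Consequently, as soon as $\hat{\theta}^{\cal T}\neq\hat{\theta}^{\cal S}$, the source parameters attain a strictly larger $q^{\star}$-weighted loss, i.e. $\hat{R}^{\text{TCP}}(\hat{\theta}^{\cal T}\given\hat{\theta}^{\cal S},z,q^{\star})<0$, and chaining this with the previous display yields the desired strict inequality.

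The hard part, and the place where the continuity assumption is genuinely load-bearing, is ruling out the degenerate case $\hat{\theta}^{\cal T}=\hat{\theta}^{\cal S}$: if the source parameters happened to coincide with the worst-case weighted maximum-likelihood fit of the target sample, the contrast at $q^{\star}$ would vanish and only equality would hold. I would argue that this cannot occur, since the weighted MLE means and covariances are determined entirely by the target points $z_j$, and the event that they exactly reproduce the fixed source estimates $(\hat{\pi}^{\cal S}_k,\hat{\mu}^{\cal S}_k,\hat{\Sigma}^{\cal S}_k)$ is non-generic and carries probability zero under a continuous target law. I expect this step to parallel the semi-supervised argument of \cite{loog2016contrastive}, the delicate point being to phrase the non-coincidence rigorously rather than lean on a loose appeal to genericity; verifying that adding the same regularizer $\lambda I$ to both $\hat{\theta}^{\cal S}$ and $\hat{\theta}^{\cal T}$ leaves the chain of inequalities intact is then a routine addendum.
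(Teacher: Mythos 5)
Your route to the weak inequality is sound and is essentially the paper's own: the contrast at the true labels is dominated by the contrast at the saddle-point labeling $q^{\star}$, which is at most $0$ because $\hat{\theta}^{\cal S}$ is a feasible competitor in the inner minimization. The genuine gap is in the final step, and you have flagged it yourself: ruling out the degenerate case $\hat{\theta}^{\cal T}=\hat{\theta}^{\cal S}$ (equivalently, a vanishing contrast at $q^{\star}$) cannot be done by a bare appeal to genericity. The difficulty is that $\hat{\theta}^{\cal T}$ is not a fixed, explicit function of the target points alone: it is the weighted MLE at the adversarial labeling $q^{\star}$, and $q^{\star}$ is itself a data-dependent solution of the minimax problem. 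A priori nothing prevents the solution map $(z_1,\dots,z_m)\mapsto\hat{\theta}^{\cal T}$ from having flat pieces on which it reproduces $\hat{\theta}^{\cal S}$, so the event $\{\hat{\theta}^{\cal T}=\hat{\theta}^{\cal S}\}$ is not obviously a null set; asserting that it is amounts to restating the strictness claim rather than proving it.

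The paper closes exactly this hole with a small invariance observation that your proposal is missing. Because soft labels satisfy $\sum_{k} q_{kj}=1$ for every $j$, the prior-weighted total mean of the {\sc tcp} estimate obeys $\sum_{k}\pi^{\cal T}_{k}\mu^{\cal T}_{k}=\frac{1}{m}\sum_{j}\big(\sum_{k}q_{kj}\big)z_{j}=\bar{z}$, i.e. it equals the target sample average \emph{no matter what} $q^{\star}$ is; likewise the total mean of the source estimate equals the source sample average $\bar{x}$. Hence parameter coincidence (which your uniqueness argument correctly identifies as the only obstruction to strictness) implies the purely sample-level event $\bar{z}=\bar{x}$, which involves no optimization quantities at all and has probability zero under continuous source and target distributions. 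This is the concrete mechanism that converts ``non-generic'' into an actual measure-zero statement. Two smaller repairs: the Gaussian weighted log-likelihood is not concave in $(\mu_k,\Sigma_k)$, so uniqueness of the inner minimizer should be argued from the closed-form weighted MLE under non-singularity rather than from strict concavity; and that non-singularity itself needs care, since the adversary may allot a class a total weight $\sum_{j}q^{\star}_{kj}$ that is arbitrarily small --- this is precisely why the paper regularizes the covariance estimates and why the same $\lambda I$ must be added to both $\hat{\theta}^{\cal S}$ and $\hat{\theta}^{\cal T}$.
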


\begin{proof}
Let $\{(x_i,y_i)\}_{i=1}^{n}$ and $\{(z_j,u_j)\}_{j}^{m}$ be sample sets of size $n,m$ drawn from continuous distributions $p_{\cal S}$ and $p_{\cal T}$, respectively. Consider a discriminant analysis model parameterized either as $\theta = (\pi_1, \dots, \pi_K, \mu_1, \dots, \mu_K, \Sigma_1, \dots \Sigma_K)$ for {\sc qda} or $\theta = (\pi_1, \dots, \pi_K, \mu_1, \dots, \mu_K, \Sigma)$ for {\sc lda}. $\hat{R}_{\text{DA}}$ denotes empirical risk measured with the Gaussian average negative log-likelihood weighed by a set of soft labels $q$: $m^{-1} \sum_{j=1}^{m} \sum_{k=1}^{K} - q_{kj} \log \mathcal{N}(z_j,k \given \theta_k)$. The sample covariance matrix is required to be non-singular, which is guaranteed when there are more unique samples than features for every class, $m_k > D$. In the {\sc lda} case, $D+K$ unique samples are sufficient. Let $\hat{\theta}^{\cal S}$ be the parameters estimated on labeled source data; $\hat{\theta}^{\cal S} = \underset{\theta \in \Theta}{\arg \min} \ \hat{R}_{\text{DA}} \big( \theta \given x, y \big)$.

Firstly, for fixed $q$, the minimized contrast between the target risk of any parameter $\theta$ and the source parameters $\hat{\theta}^{\cal S}$ is non-positive, because both parameters sets are elements of the same parameter space, $\theta, \hat{\theta}^{\cal S} \in \Theta$:
\begin{align}
	\underset{\theta \in \Theta}{\min} \ \hat{R}_{\text{DA}} \big(\theta \given z,q \big) - \hat{R}_{\text{DA}} \big(\hat{\theta}^{\cal S} \given z,q \big) \leq 0 \nonumber \, .
\end{align}
$\theta$'s that result in a larger target risk than that of $\hat{\theta}^{\cal S}$ are not minimizers of the contrast. The maximum value it can attain is $0$ which occurs when exactly the same parameters are found; $\theta = \hat{\theta}^{\cal S}$. Considering that the contast is non-positive for any labeling $q$, it is also non-positive with respect to the worst-case labeling:
\begin{align}
	\underset{\theta \in \Theta}{\min} \ \underset{q \in \Delta_{K-1}^{m}}{\max} \ \hat{R}_{\text{DA}} \big( \theta \given z,q \big) - \hat{R}_{\text{DA}} \big( \hat{\theta}^{\cal S} \given z,q \big) \leq 0 \label{nonpos} \, .
\end{align}

Secondly, given that the empirical risk with respect to the true labeling is always less than or equal to the empirical risk with the worst-case labeling, $\hat{R}(\theta \given z,u) \leq \underset{q}{\max} \ \hat{R}( \theta \given z,q)$, the target contrastive risk (\ref{contrast}) with the true labeling $u$ is always less than or equal to the target contrastive pessimistic risk (\ref{tcp}):
\begin{align}
	\underset{\theta \in \Theta}{\min} \ \hat{R}_{\text{DA}} \big( \theta \given z,u \big)& - \hat{R}_{\text{DA}} \big( \hat{\theta}^{\cal S} \given z,u \big) \leq \nonumber \\
	& \underset{\theta \in \Theta}{\min} \ \underset{q \in \Delta_{K-1}^{m}}{\max} \hat{R}_{\text{DA}} \big( \theta \given z,q \big) - \hat{R}_{\text{DA}} \big( \hat{\theta}^{\cal S} \given z,q \big)  \label{conpes} \, .
\end{align}

Let ($\hat{\theta}^{\cal T}, q^{*})$ be the minimaximizer of the target contrastive pessimistic risk on the right-handside of (\ref{conpes}).
%: $(\hat{\theta}^{\cal T},q^{*}) = \underset{\theta \in \Theta}{\arg \min} \ \underset{q \in \Delta_{K-1}^{m}}{ \arg \max} \hat{R}_{\text{DA}} (\theta \given z,q) - \hat{R}_{\text{DA}} (\hat{\theta}^{\cal S} \given z,q)$. 
Plugging these estimates in into (\ref{conpes}) produces:
\begin{align}
	\hat{R}_{\text{DA}} \big( \hat{\theta}^{\cal T} \given z,u \big)& - \hat{R}_{\text{DA}} \big( \hat{\theta}^{\cal S} \given z,u \big) \ \leq \  \hat{R}_{\text{DA}} \big( \hat{\theta}^{\cal T} \given z, q^{*}) - \hat{R}_{\text{DA}} \big(\hat{\theta}^{\cal S} \given z, q^{*} \big)  \label{conpes_t} \, .
\end{align}
Combining inequalities \ref{nonpos} and \ref{conpes_t} gives:
\begin{align}
	\hat{R}_{\text{DA}} \big( \hat{\theta}^{\cal T} \given z,u \big) - \hat{R}_{\text{DA}} \big( \hat{\theta}^{\cal S} \given z,u \big) \leq& 0 \nonumber \\
	\hat{R}_{\text{DA}} \big( \hat{\theta}^{\cal T} \given z,u \big) \leq& \ \hat{R}_{\text{DA}} \big( \hat{\theta}^{\cal S} \given z,u \big) \label{crisks} \, .
\end{align}
However, equality of the two risks in \ref{crisks} occurs with probability $0$, which we will show in the following. 

The total mean for the source classifier consists of the weighted combination of the class means, resulting in the overall source sample average: 
\begin{align}
	\mu^{\cal S} =& \sum_{k=1}^{K} \pi^{\cal S}_k \ \mu^{\cal S}_k \nonumber \\
	=& \sum_{k=1}^{K} \frac{\sum_{i}^{n} y_{ki}}{n} \left[ \frac{1}{\sum_{i}^{n} y_{ki}} \sum_{i=1}^{n} y_{ki} x_i \right] \nonumber \\
	=& \frac{1}{n} \sum_{i=1}^{n} x_i \label {source_totmean} \, .
\end{align}

The total mean for the {\sc tcp-da} estimator is similarly defined, resulting in the overall target sample average:
\begin{align}
	\mu^{\cal T} =& \sum_{k=1}^{K} \pi^{\cal T}_{k} \ \mu^{\cal T}_k \nonumber \\
	=& \sum_{k=1}^{K} \frac{\sum_{j}^{m} q_{kj}}{m} \left[ \frac{1}{\sum_{j}^{m} q_{kj}} \sum_{j=1}^{m} q_{kj} z_j \right] \nonumber \\
	=& \sum_{k=1}^{K} \frac{1}{m} \sum_{j=1}^{m} q_{kj} z_j \label{tcp_totmean1} \\
		=& \frac{1}{m} \sum_{j=1}^{m} z_j \label{tcp_totmean2} \, .
\end{align}
Note that since $q$ consists of probabilities, the sum over classes $\sum_{k}^{K} q_{kj}$ in (\ref{tcp_totmean1}) is $1$, for every sample $j$. Equal risks for these parameter sets, $\hat{R}_{\text{DA}} \big( \theta^{\cal T} \given z,u \big) = \hat{R}_{\text{DA}} (\hat{\theta}^{\cal S} \given z,u)$, implies equality of the total means, $\mu^{\cal T}$ = $\mu^{\cal S}$. By Equations \ref{source_totmean} and \ref{tcp_totmean2}, equal total means imply equal sample averages: $m^{-1} \sum_{j=1}^{m} z_j = n^{-1} \sum_{i=1}^{n} x_i$. Drawing two sets of samples with \emph{exactly equal} sample averages constitutes the union of two single events: 
\begin{align}
	\text{Pr} \big[ \ \bar{x} = \ \bar{z} \ \big] = \ &p_{\cal S}\big( \ {\cal S}_1 = x_1, {\cal S}_2 = x_2, \dots,  {\cal S}_n = x_n \given \bar{x} = \bar{z} \ \big) \nonumber \\
	\ &\cup \ p_{\cal T} \big( \ {\cal T}_1 = z_1, {\cal T}_2 = z_2, \dots,  {\cal T}_m = z_m \given \bar{z} = \bar{x} \ \big) \nonumber \, ,
\end{align}
where the bars over the samples $\bar{x},\bar{z}$ denote the sample averages. By definition, single events under continuous distributions have probability $0$. Therefore, a strictly smaller risk occurs almost surely:
\begin{align}
	\hat{R}_{\text{DA}} \big( \hat{\theta}^{\cal T} \given z,u \big) \ < \ \hat{R}_{\text{DA}} \big( \hat{\theta}^{\cal S} \given z,u \big) \nonumber \, .
\end{align}
\end{proof}

\section*{References}
\bibliography{kouw_pr17a}

\end{document}